\newtheorem{theorem}{Theorem}
\newtheorem{definition}{Definition}
\journal{}
\begin{document}

\begin{frontmatter}



\title{An RRT* algorithm based on Riemannian metric model for optimal path planning}


\author[1]{Yu Zhang}
\ead{zhangyu348@hust.edu.cn}

\author[1]{Qi Zhou}
\ead{}

\author[1,2]{Xiao-Song Yang \corref{cor1}}
\ead{yangxs@hust.edu.cn}

\address[1]{School of Mathematics and Statistics, Huazhong University of Science and Technology, Wuhan 430074, China}
\address[2]{Hubei Key Laboratory of Engineering Modeling and Scientific Computing, Huazhong University of Science and Technology, Wuhan 430074, China}

\cortext[cor1]{Corresponding author}


\begin{abstract}
This paper presents a Riemannian metric-based model to solve the optimal path planning problem on two-dimensional smooth submanifolds in high-dimensional space. Our model is based on constructing a new Riemannian metric on a two-dimensional projection plane, which is induced by the high-dimensional Euclidean metric on two-dimensional smooth submanifold and reflects the environmental information of the robot. The optimal path planning problem in high-dimensional space is therefore transformed into a geometric problem on the two-dimensional plane with new Riemannian metric. Based on the new Riemannian metric, we proposed an incremental algorithm RRT*-R on the projection plane. The experimental results show that the proposed algorithm is suitable for scenarios with uneven fields in multiple dimensions. The proposed algorithm can help the robot to effectively avoid areas with drastic changes in height, ground resistance and other environmental factors. More importantly, the RRT*-R algorithm shows better smoothness and optimization properties compared with the original RRT* algorithm using Euclidean distance in high-dimensional workspace.
The length of the entire path by RRT*-R is a good approximation of the theoretical minimum geodesic distance on projection plane.

\end{abstract}



\begin{keyword}


Robots \sep
Optimal path planning \sep
Riemannian metric \sep
RRT* algorithm

\end{keyword}

\end{frontmatter}


\section{Introduction}
\label{sec:introduction}

Motion planning has always been the core issue in robotics research\cite{siegwart2011book}\cite{siciliano2008book}\cite{motionplanningblue}\cite{MotionPlanningTRO}.
The existence problem of the path can be described by finding a route from the starting point to the desired destination without collisions to the obstacles.
Khatib proposed the Artificial Potential Fields method to address motion planning challenges\cite{khatib1986real}. The negative gradient of this field leads a way avoiding obstacles and towards the destination, but also brings the appearance of local minima which prevents the robot from safely moving to the destination. To counter this issue, Navigation Function was proposed by Koditschek and Rimon \cite{koditschek1990robot}\cite{rimon1990exact2} in the early 90’s. Subsequent research has focused on developing navigation functions for complex 3-D workspaces \cite{loizou2012complex3d}\cite{filippidis2012PartiallySufficientlyCurvedWorlds}\cite{paternainKoditschek2017navigation} and exploring variations such as those based on harmonic potentials\cite{Laplace'sEquation1990path}\cite{kim1992real}.

Among the significant amount of scientific researches on motion planning, the path optimization problem holds practical significance \cite{WildernessSearchandRescue}\cite{AMMrobotpathplanning}.
Common shortest-path algorithms, such as Dijkstra's and A*, provide solutions based on graph structures \cite{dijkstra2022note} \cite{Astar1968formal}. Recently, the Rapidly-exploring Random Tree Star (RRT*) algorithm\cite{RRT}\cite{firstRRT*2011sampling} has gained popularity in robotics and computer graphics. RRT* is a global path planning method and an enhancement of the original RRT. It explores the space by continually expanding a tree structure and optimizes the generated paths to achieve a global optimum. RRT* combines rapid exploration with path optimization, making it suitable for complex environments and high-dimensional spaces.

Path planning on a two-dimensional plane has become a well-addressed problem with the availability of various open-source libraries \cite{opensource}. Nevertheless, as long as the robot is moving on the “ground”, its motion is strictly constrained. In order to ameliorate that, Liu Ming \cite{liuming2015dingkan}defined a Riemannian metric on tensor voting vector field, which has opened a wide horizon for subsequent research \cite{ieeeaccess}\cite{lm2016deep} on optimal path planning on surfaces. Several recent researches have applied the idea of Riemannian metric in different scenarios of path planning \cite{shootingmethod}. Setting the “ground” as a Riemannian manifold allows to naturally define a local Riemannian metric that encodes the geometric information \cite{petersen2006riemannianbook}. Aziz, F. proposed a Riemannian approach for free-space extraction and path planning using color catadioptric vision. \cite{likui2020riemannian}.

However, despite the increasing deployment of surface-moving robots in various real-world outdoor applications, their performance can be significantly influenced by environmental factors, including ground resistance, temperature, wind speed and so on. Drastic changes in ground resistance can have a profound impact on the energy consumption of these robots during their movement processes \cite{GroundRobotsEnergyConsumption}.
When the friction of the surface increases, the robot must overcome greater resistance to move. This requires the motors or drive system to expend more energy to maintain the same speed. When the friction of the surface decreases, although the resistance is lower, the robot may need to frequently adjust its speed or posture to remain stable and controlled, which also consumes additional energy.
Moreover, during transitions between different ground materials, the robot might need to accelerate or decelerate to adapt to the new friction conditions. These processes consume extra energy. 

In general, the surface on which the robot moves can be regarded as a 2-D manifold denoted by height function $x_3 (x_1,x_2)$ in 3-D space. Moreover, if the environmental factors are also taken into account, the parameter space of robot motion planning will become multi-dimensional. For example, when we consider ground friction resistance, it's a function constrained to the two-dimensional surface. So we can use $x_4=x_4 (x_1,x_2)$ to represent the magnitude of the ground resistance. In general, all the other factors considered can be represented as functions $x_5, x_6, ...,x_n$ depending on $x_1,x_2$. Therefore, we can formulate the path planning problem on a two-dimensional submanifold in high-dimensional space.

In this paper, we project the two-dimensional smooth submanifold in $\mathbb{R}^n$ mentioned above onto the $(x_1,x_2)$ plane $\mathbb{R}^2$, and construct the new Riemannian metric on the two-dimensional projection plane reflecting the high-dimensional environmental information. In addition, it is easy to see, as shown in Section \ref{sec:Methodology}, that the new Riemannian metric on $(x_1,x_2)$ plane is isometric to the induced metric of submanifold by Euclidean metric on $\mathbb{R}^n$, so the general problem of optimal path planning in high-dimensional space is transformed into a geometric problem on the two-dimensional plane with new Riemannian metric. Thus, high-dimensional complex path planning problems can be solved on the two-dimensional plane. Furthermore, we apply the idea of Riemannian metric to the RRT* algorithm and present a series of simulation experiments. We test the performance of the RRT* algorithm based on Riemannian metric for optimal path planning in scenarios with different surface curvature and different dimensions.
Comparative experiments with the original RRT* algorithm using Euclidean distance are conducted. Under the condition that the number of sampling points and step size are the same, we find that the path retrieved by RRT*-R algorithm has better smoothness and optimization properties as the dimension of the workspace increases. 
Moreover, compared with the theoretical optimal path length, namely geodesic length, it is found that the difference between the path length retrieved by RRT*-R algorithm and the geodesic length is very small, and the robot effectively avoids the peak area where environmental factors such as height and ground resistance change dramatically, which verifies the accuracy of the algorithm. Finally, we conduct a large number of repeatability experiments and convergence experiments to verify the stability of the algorithm.

The rest of the paper is organized as follows. In Section \ref{sec:Problem Formulation and background}, we briefly introduce the notations in Riemannian geometry. A new Riemannian metric is constructed on the projection plane in Section \ref{sec:Methodology}. After that, we propose a Riemannian metric-based RRT*-R algorithm to solve the optimal path planning problem in Section \ref{sec:Algorithm}. To demonstrate the efficiency of the proposed method, experimental results are presented and discussed in Section \ref{sec:Simulation}. We conclude with a discussion of our method and directions for future work in Section \ref{sec:Conclusion}. In order to avoid disrupting the flow of the presentation, the detailed calculation process of Christoffel symbol and geodesic equation are presented in the Appendix.

\section{Preliminary}
\label{sec:Problem Formulation and background}

In this section, we introduce the necessary notations and definitions in Riemannian geometry \cite{boston1992riemannian} \cite{connecting2012thielhelm} for the formulation of our method, necessarily omitting some technicalities and details due to the lack of space. For notational compactness, we use the Einstein summation convention in this paper. It allows for the omission of the summation symbol $\sum $ and whenever an index variable appears twice in a term, it has to be added up for every possible value of the index.

We aim at constructing a new Riemannian metric on a Riemannian manifold. Commonly, a nonempty topological space $M$ is called m-dimensional topological manifold if is locally homeomorphic to the m-dimensional Euclidean space. A manifold together with an atlas is called differentiable of order $k$ if
the chart transitions are differentiable of order $k$ for each pair of charts. The collection of all tangent vectors to curves passing through a point $p$ forms a vector space $T_{p}M$, the tangent space of $M$ in $p$. Every chart $x$ covering $p$ induces a basis of $T_{p}M$ given by the vectors $\frac{\partial }{\partial x_i }  $ that are tangent to the
coordinate curves defined by $x$. 

Here we take the height surface in $\mathbb{R}^3$ as an example. Let the surface be represented by a smooth binary function $x_3: \mathbb{R}^2 \to \mathbb{R}$ via $X_h: (x_1, x_2)\mapsto (x_1, x_2, x_3(x_1, x_2))$ , where $x_1$ and $x_2$ are parameters on the plane. A height surface $M= X_h(\mathbb{R}^2)$ is obviously completely covered by the chart $X_h^{-1} $. The mathematical representation of the basis vectors of its tangent plane at each point can be determined by computing the partial derivatives of the surface. Let's consider a point $P$ on the surface with coordinates $(x_1^P, x_2^P, x_3^P)$. To compute the first basis vector, we fix $x_2 = x_2^P$ and make a small variation in $x_1$ (e.g. $x_1 = x_1^P + \varepsilon $), where $\varepsilon $ is an infinitesimal quantity. Then, we can calculate the corresponding change in the surface height $ \triangle x_3=x_3(x_1^P+\varepsilon,x_2^P)-x_3 (x_1^P, x_2^P)$. The components of the first basis vector can be expressed as  $\triangle x_1=\varepsilon$, $\triangle x_2=0$, and $\triangle x_3= \triangle x_3$. Therefore, the first basis vector is $ \vec{e_1}=(1,0,\frac{\partial x_3}{\partial x_1} )$. Similarly, the second basis vector is $\vec{e_2}=(0,1,\frac{\partial x_3}{\partial x_2} )$ as shown in Fig~\ref{tangentplane}. In this tangent plane, any vector can be represented as a linear combination of the basis vectors. Consider a vector in the tangent plane with components $(v_1, v_2, v_3)$, then it equals $$(v_1, v_2, v_3)= v_1\cdot (1,0,\frac{\partial x_3}{\partial x_1} )+ v_2\cdot (0,1,\frac{\partial x_3}{\partial x_2} ) $$
It's obvious that the third component $v_3= v_1\cdot \frac{\partial x_3}{\partial x_1}+v_2\cdot \frac{\partial x_3}{\partial x_2}$. 

\begin{figure}
    \centering
    \includegraphics[width=0.7\linewidth]{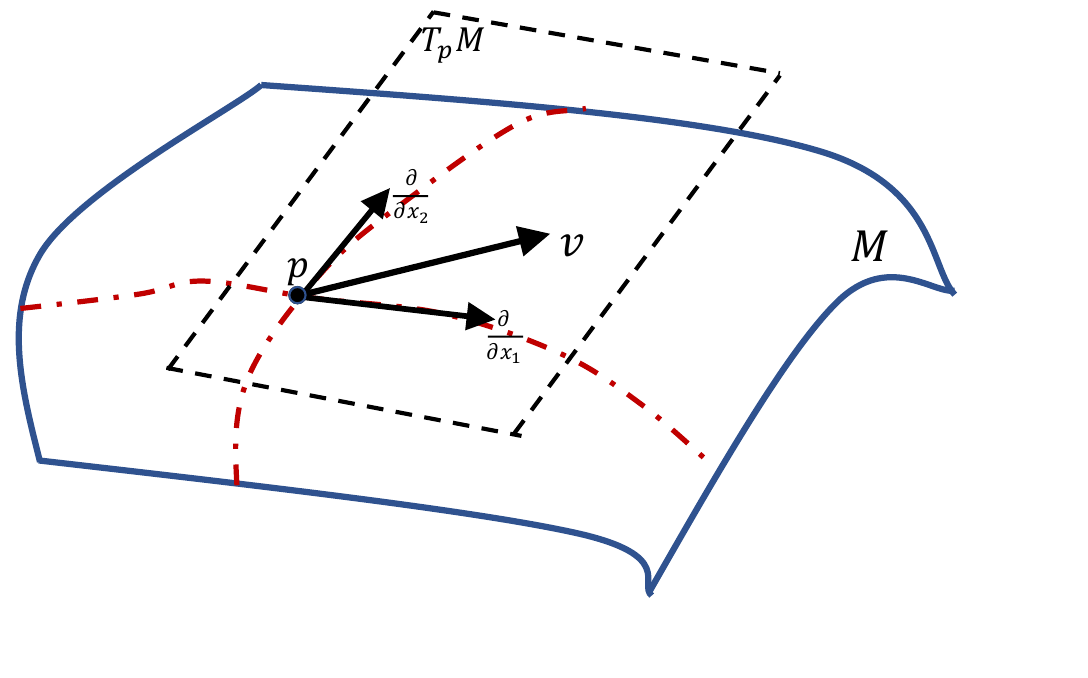}
    \caption{Tangent plane $T_{p}M$ at point $p$ on Riemannian manifold $M$ and its basis vector}
    \label{tangentplane}
\end{figure}

The Riemannian manifold $(M, g)$ consists of a $C^{\propto}$-manifold $M$ and a Riemannian metric $g_p$ which is a smooth second-order covariant tensor field on each of the tangent spaces $T_{p}M $ of $M$. Obviously, $p\mapsto g_p$ varies smoothly, which means that for any two smooth vector fields $X,Y$, the inner product $g_p(X|_p,Y|_p)$ is a smooth function of $P$. The subscript $P$ is usually omitted. It can be expressed in local coordinates as 
$$\left \langle v,w \right \rangle = g_{ij} v^i w^j  ,  g_{ij}=\left \langle \frac{\partial }{\partial x^i}, \frac{\partial }{\partial x^j} \right \rangle $$

\begin{definition}
The tensor $g$ is referred to as the Riemannian metric on $M$, if $g$ satisfies both symmetry and positive definiteness.

(1)symmetry: $g(u,v)=g(v,u)$, $\forall u,v\in T_{P}M $

(2)positive definiteness: $g(u,u)\ge$ 0 for all $ u\in$ $T_{p}M $, where the equal sign holds if and only if $u = 0$
\end{definition}

On a manifold $M$, we can multiply 1-forms to get bilinear forms: $ \theta_1 \cdot \theta_2 (v,w)=\theta_1 (v)\cdot \theta_2 (w) $, where $\theta_1 \cdot \theta_2 \ne  \theta_2\cdot \theta_1$ and the multiplication here is actually a tensor product $\theta_1 \cdot \theta_2=\theta_1\otimes \theta_2$. Further let $(U;x^i )$ denote an allowable local coordinate system where given coordinates $x(p)=(x^1,\cdots,x^m)$ on an open set U of M. Thus we can construct bilinear forms $dx^{i}\otimes dx^{j} $. Then, we can write $g|_U=g_{ij}dx^{i}\otimes dx^{j}$, here $g_{ij}=g(\frac{\partial }{\partial x^{i} },\frac{\partial }{\partial x^{j} } )$ and $g_{ij}=g_{ji}$. After the introduction of symmetric product $dx^i dx^j=\frac{1}{2} (dx^{i}\otimes dx^j+dx^j\otimes dx^i) $, we have the final expression $g|_U=g_{ij}dx^i dx^j$

We now turn our attention to the concept of isometric immersion. In essence, an isometric immersion is a smooth mapping that preserves the metric structure between two manifolds. This means that during the immersion process, geometric properties such as distances, angles, and curvatures remain unchanged between the two manifolds. It provides a powerful tool for studying the relationships between different manifolds and understanding their shared geometric properties.

\begin{definition}{(isometric immersion)}
An isometric immersion between Riemannian manifolds $(M,g)$ and $(N,h)$ is a smooth mapping $f: M\to N$ such that $g=f^{\ast } h$,i.e. $h(f_{\ast }(v), f_{\ast }(w))=g(v,w)$ for all tangent vectors $v,w\in T_{p}M $ and all $p\in M$
\end{definition}

The Riemannian metric induces a norm $\left \| v \right \| =\left \langle v,v \right \rangle ^{\frac{1}{2} } = \sqrt{g(v,v)} $ for tangent vectors. We can now define the length $L(\gamma )$ of a continuously differentiable curve $\gamma:[a, b] \rightarrow M$ via $$L(\gamma)=\int_{a}^{b}\left \| \dot{\gamma } (t) \right \| dt =\int_{a}^{b} \sqrt{g_{\gamma(t)}(\dot{\gamma}(t), \dot{\gamma}(t))} dt$$
Basing on the length definition, we introduce the distance function 
$$d_M (p,q)= \inf\left \{ L(\gamma )\mid \gamma \ connects \ p\  and\ q\right \} $$
which forms a metric space $d_M$. The computation of the precise distance between arbitrary points $p$ and $q$ remains a challenging task since the infimum mentioned above is typically hard to attain by analytic methods. However, a curve which locally minimizes the quantity $L$ satisfies a differential equation as shown in the next paragraph.

On a Riemannian manifold $M$, a curve $\gamma (t)$ is called a geodesic if its tangent vector along the curve has zero covariant derivative,satisfying the geodesic equation:
$$\nabla _{\dot{\gamma } (t)}\dot{\gamma } (t)=0$$
where $\dot{\gamma } (t)$ represents the tangent vector of the curve and $\nabla$ denotes the Levi-Civita connection on the Riemannian manifold $M$. The local existence and uniqueness theorem of geodesics states that for initial conditions $\gamma (t_0)=p$ and initial tangent vector $v_{0} $, the geodesic equation is a second-order nonlinear differential equation with a unique solution. In coordinate representation, the geodesic equation can be written as: 
$$\frac{\mathrm{d}^2 x^k}{\mathrm{~d} t^2}+\frac{\mathrm{d} x^{j}}{\mathrm{~d} t} \frac{\mathrm{d} x^i}{\mathrm{~d} t} \Gamma_{ji}^k=0, \quad 1 \leq k \leq m$$
where $x^{k}$ represents the coordinate functions on the manifold $M$, t is the parameter, and $\Gamma _{ij}^{k}$ are the Christoffel symbols, defined as:
$$\Gamma _{ij}^k =\frac{1}{2}g^{kl}\left(\frac{\partial g_{il} }{\partial x^j}+\frac{\partial g_{lj} }{\partial x^i}-\frac{\partial g_{ij} }{\partial x^l} \right)$$

Here $[g_{ij}]$ is the Riemannian metric tensor and $[g^{kl}]$ is its inverse matrix.



\section{Geometric model and method}
\label{sec:Methodology}
In this section, we take two-dimensional projection plane with a new Riemannian metric as a geometric model for solving the path planning problem.
This is due to the consideration that the constructed Riemannian metric contains environmental information in the high-dimensional Euclidean space, such as height change, ground resistance, etc., and the optimal path planning problem in high-dimensional workspace can be transformed into a geometric problem on the two-dimensional plane with new Riemannian metric.
\subsection{Path planning on surface in $\mathbb{R}^{3}$}
Different from the two-dimensional plane on which the robot can move freely, the surface in three-dimensional space is a three-dimensional object with curvature and local geometric characteristics. In order to maintain contact with the surface or follow specific motion constraints, the robot needs to consider both the tangent plane and the direction of the surface normal vector during motion planning, so as to determine the appropriate path and adjust the moving direction.
In addition, the measurement of motion length is complicated in the motion planning of three-dimensional space surfaces. For a two-dimensional plane, the length of motion can simply be measured using linear distance or curve length. However, in three-dimensional space, the geometric properties of the surface cause the path to bend and extend in three-dimensional space, and the linear distance does not accurately reflect the actual motion distance on the surface, and the length of the motion on the surface usually requires the use of more complex measures.

According to the derivation of the basis vectors in the tangent plane of the height surface in Section \ref{sec:Problem Formulation and background}, it can be clearly seen that locally, when a point on the surface moves by unit length in the direction of $x_1 $, it will bring increment $\frac{\partial x_3 }{\partial x_1 } $ in the direction of $x_3 $. Similarly, when a point moves by unit length in the direction of $x_2 $, it will bring increment $\frac{\partial x_3 }{\partial x_2 } $ in the direction of $x_3 $. 
Our construction of the isometric Riemannian metric on the projection plane is inspired by it, the details of which are presented in the generalized higher dimensional case in the next section. 
Moreover, when the movement of the robot on the surface is also affected by factors such as ground resistance, the above discussion on the surface height function $x_3 $ is still applicable to the ground resistance function $x_4 $ .

\subsection{Path planning on surface in $\mathbb{R}^n$}
In order to solve the path planning problem, we propose the following geometric model. In this section, we present a general framework for optimal path planning by constructing a new Riemannian metric on the projection plane $\mathbb{R}^2 $ from the standard Euclidean metric on the two-dimensional smooth manifold in $\mathbb{R}^n$.

Denote the two-dimensional smooth manifold $M$ in n-dimensional space as $$\begin{aligned}
\vec{s}&: U \subset\mathbb{R}^2 \rightarrow \mathbb{R}^n \\
\vec{s}\left(x_1, x_2\right)&=\left(x_1, x_2, x_3\left(x_1, x_2\right), \cdots, x_n\left(x_1, x_2\right)\right)
\end{aligned}$$
where $U$ is a open set in $\mathbb{R}^2$. Obviously, $\vec{s}\left( U \right)$ is embedded in the n-dimensional space $\mathbb{R}^n$,where the standard Euclidean metric can be denoted as $g_{ij}$. The following is about its calculation: 
$$\begin{aligned}
g_{11}&=g((1,0,\cdots,0),(1,0,\cdots,0))=1 \\
g_{12}&=g((1,0,\cdots,0),(0,1,\cdots,0))=0
\end{aligned}$$
Similarly, 
$g_{ii}=1 (i=1,\cdots,n) $ and $g_{ij}=0 (i\ne j)$.

\noindent Finally we have  
$g=g_{i j} d x^{i} \otimes d x^j=g_{ij} d x^i d x^j =g_{11} d x^1 d x^1+g_{22} d x^2 d x^2+\cdots+g_{nn} d x^n d x^n = d x^1 d x^1+d x^2 d x^2+\cdots+d x^n d x^n $. Choose two vectors on the tangent plane $T_{p}M $ at any point $p$ on the manifold $\vec{s}(U)$ as follows:
$$\begin{aligned}
&\left(1,0, \frac{\partial x_3}{\partial x_1}, \frac{\partial x_4}{\partial x_1}, \ldots, \frac{\partial x_n}{\partial x_1}\right) \ldots . (\ast ) , \\
&\left(0,1, \frac{\partial x_3}{\partial x_2}, \frac{\partial x_4}{\partial x_2}, \ldots, \frac{\partial x_n}{\partial x_2}\right) \ldots (\ast\ast )
\end{aligned}$$
Now we can construct a new Riemannian metric $h=h_{i j} d x^i d x^j=h_{11} d x^1 d x^1 + h_{12} d x^1 d x^2 + h_{21} d x^2 d x^1 + h_{22} d x^2 d x^2$ on the projection plane $\mathbb{R}^2 $.
Let 
$$
\begin{aligned}
h_{11}&=g((\ast ), (\ast ))=1+\sum\limits_{k=3}^n{\left(\frac{\partial x_k}{\partial x_1}\right)^2}\\
h_{12}&=h_{21}=g( (\ast), (\ast\ast) )=\sum\limits_{k=3}^n\frac{\partial x_{k}}{\partial x_{1}} \cdot \frac{\partial x_k}{\partial x_2}\\
h_{22}&=g( (\ast\ast), (\ast\ast) )=1+\sum\limits_{k=3}^n\left(\frac{\partial x_k}{\partial x_2}\right)^2
\end{aligned}
$$

\noindent Thus the Metric matrix $[h_{ij}]$ can be written as 
$$
\large \left[h_{ij}\right]=\begin{bmatrix}
    1+\sum\limits_{k=3}^n\left(\frac{\partial x_k}{\partial x_1}\right)^2 & \sum\limits_{k=3}^n\frac{\partial x_k}{\partial x_1} \cdot \frac{\partial x_k}{\partial x_2}\\
    \sum\limits_{k=3}^n\frac{\partial x_k}{\partial x_1} \cdot \frac{\partial x_k}{\partial x_2} & 1+\sum\limits_{k=3}^n\left(\frac{\partial x_k}{\partial x_2}\right)^2 
\end{bmatrix}
$$

Furthermore, the inverse matrix can be calculated by
$\left[h_{i j}\right]^{-1} \triangleq\left[h^{k l}\right]$

$${\large \left[h^{k l}\right]=\frac{1}{d}\begin{bmatrix}
1+\sum\limits_{k=3}^n\left(\frac{\partial x_k}{\partial x_2}\right)^2&-\sum\limits_{k=3}^n \frac{\partial x_k}{\partial x_1} \cdot \frac{\partial x_k}{\partial x_2} \\
-\sum\limits_{k=3}^n \frac{\partial x_k}{\partial x_1} \cdot \frac{\partial x_k}{\partial x_2} & 1+\sum\limits_{k=3}^n\left(\frac{\partial x_k}{\partial x_1}\right)^2
\end{bmatrix}
} $$
here
$$
\begin{aligned}
  d= \det(h_{ij})=&\left[1+\sum\limits_{k=3}^n\left(\frac{\partial x_k}{\partial x_1}\right)^2\right] \cdot \left[1+\sum\limits_{k=3}^n\left(\frac{\partial x_k}{\partial x_2}\right)^2\right] -\left[\sum\limits_{k=3}^{n} \frac{\partial x_k}{\partial x_1} \cdot \frac{\partial x_k}{\partial x_2}\right]^{2}  
\end{aligned}  
$$

We also point out that the projection from this two-dimensional smooth manifold $\vec{s}$ in n-dimensional space to plane $\mathbb{R}^2$ is an isometric map.
\begin{theorem}\label{thm2}
A two-dimensional smooth manifold $M$ in $\mathbb{R}^n$ is equipped with the standard Euclidean metric $g=d x^1 d x^1+d x^2 d x^2+\cdots+d x^n d x^n$. If its projection plane $\mathbb{R}^2$ is equipped with a new Riemannian metric $h=h_{i j} d x^i d x^j$ as above, then the Projection mapping $f: M \rightarrow \mathbb{R}^2$ is an isometric immersion.
\end{theorem}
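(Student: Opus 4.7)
The plan is to exploit the fact that, in the chart $\vec{s}: U \to M \subset \mathbb{R}^n$, the projection $f$ becomes the identity on $U$, so the whole statement reduces to checking that the metric $h$ was constructed precisely so that the pullback formula holds on a coordinate basis. First I would unwind definitions: since every point of $M$ is of the form $\vec{s}(x_1,x_2)$ for some $(x_1,x_2)\in U$, the projection is $f(\vec{s}(x_1,x_2)) = (x_1,x_2)$, so $f\circ \vec{s} = \mathrm{id}_U$. This immediately gives smoothness of $f$ and, by the chain rule, $f_\ast \circ \vec{s}_\ast = \mathrm{id}$ on tangent spaces, which already implies $f_\ast$ has rank $2$ everywhere, hence $f$ is an immersion.

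Next I would identify a convenient basis of $T_pM$. The parametrization $\vec{s}$ induces the coordinate basis $e_i := \vec{s}_\ast\!\left(\frac{\partial}{\partial x_i}\right) = \frac{\partial \vec{s}}{\partial x_i}$ for $i=1,2$, and these are exactly the vectors $(\ast)$ and $(\ast\ast)$ listed in the excerpt. Under $f_\ast$ they are sent to $\frac{\partial}{\partial x_1}$ and $\frac{\partial}{\partial x_2}$ on $\mathbb{R}^2$. It then suffices, by bilinearity of both $g$ and $h$, to verify the isometry condition $g(e_i,e_j) = h(f_\ast e_i, f_\ast e_j)$ on this basis.

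The verification is a direct computation using the ambient Euclidean metric $g = \sum_{k=1}^n dx^k\,dx^k$. For instance,
\begin{equation*}
g(e_1,e_1) = 1 + \sum_{k=3}^n \left(\frac{\partial x_k}{\partial x_1}\right)^{\!2},
\end{equation*}
and likewise for $g(e_1,e_2)$ and $g(e_2,e_2)$. Comparing with the definitions of $h_{11}, h_{12}, h_{22}$ given just before the theorem shows $g(e_i,e_j) = h_{ij} = h(f_\ast e_i, f_\ast e_j)$, which is exactly $g = f^\ast h$.

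I do not expect a real obstacle here: the metric $h$ was defined so as to make this equality true on the coordinate basis, so the theorem is essentially a tautology once the chart identification $f\circ\vec{s}=\mathrm{id}_U$ is made explicit. The only thing to be careful about is bookkeeping, namely that one must view $T_pM$ as a subspace of $\mathbb{R}^n$ (with the induced Euclidean inner product) when computing $g$, and use the symmetric tensor product convention $dx^i dx^j = \tfrac{1}{2}(dx^i\otimes dx^j + dx^j\otimes dx^i)$ when reading off the components $h_{ij}$, so that the mixed terms in $h$ are counted once rather than twice.
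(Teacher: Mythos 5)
Your proposal is correct and follows essentially the same route as the paper: both arguments come down to observing that the coordinate basis of $T_pM$ consists of the vectors $(\ast)$ and $(\ast\ast)$, that $f_\ast$ sends them to $\frac{\partial}{\partial x_1},\frac{\partial}{\partial x_2}$, and that $h_{ij}$ was defined as the Euclidean Gram matrix of these vectors, so $g = f^\ast h$ holds by construction. The only difference is organizational — you check the identity on the basis and invoke bilinearity, whereas the paper expands arbitrary tangent vectors $v,w$ using $v_k = v_1\frac{\partial x_k}{\partial x_1} + v_2\frac{\partial x_k}{\partial x_2}$ and verifies both sides directly — and your added remark that $f_\ast\circ\vec{s}_\ast=\mathrm{id}$ gives the immersion property is a small point the paper leaves implicit.
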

\begin{proof}
    We showed in Definition 2 that a diffeomorphism is isometric if it satisfies $g=f^{\ast } h$. Hence, it remains to be shown that $h(f_{\ast }(v), f_{\ast }(w))=g(v,w)$ for all tangent vectors $v,w\in T_{p}M $ and all $p\in M$. Let $v=(v_1,v_2,v_3,\cdots,v_n),w=(w_1,w_2,w_3,\cdots,w_n)$. The left side equals (1)

\begin{align}
    &h(f_{\ast }(v), f_{\ast }(w))=  h\left(\left(v_{1}, v_{2}\right),\left(w_1, w_2\right)\right) \notag\\
    &=  h_{11} v_1 w_1+h_{12} v_1 w_2+h_{21} v_2 w_1+h_{22} v_2 w_2 \notag\\
    &=  v_1 w_1\left[1+\sum\limits_{k=3}^n\left(\frac{\partial x_k}{\partial x_1}\right)^2\right] + v_2 w_2 \left[1+\sum\limits_{k=3}^n\left(\frac{\partial x_k}{\partial x_2}\right)^2\right]\notag\\ &+\left(v_1 w_2+v_2 w_1\right) \left[\sum\limits_{k=3}^n \frac{\partial x_k}{\partial x_1}\cdot\frac{\partial x_k}{\partial x_2}\right] 
\end{align}

Since 
$\left(v_1, v_2, v_3,\cdots,v_n\right)=\left(v_1, 0, v_1 \frac{\partial x_3}{\partial x_1},\cdots, v_1 \frac{\partial x_n}{\partial x_1} \right)+\left(0, v_2, v_2\frac{\partial x_3}{\partial x_2},\cdots,v_2 \frac{\partial x_n}{\partial x_2}\right)$, then $v_n$ has an expression $v_n=v_1\frac{\partial x_n}{\partial x_1}+v_2 \frac{\partial x_n}{\partial x_2}$. Similarly, $w_n=w_1\frac{\partial x_n}{\partial x_1}+w_2 \frac{\partial x_n}{\partial x_2}$. Thus the right side equals (2)

\begin{align}
& g(v,w)= v_1 w_1+v_2 w_2+v_3 w_3+\cdots+v_n w_n \notag\\
& = v_1w_1\left[1+\sum\limits_{k=3}^n\left(\frac{\partial x_k}{\partial x_1}\right)^2\right]  
+ v_2 w_2\left[1+\sum\limits_{k=3}^n\left(\frac{\partial x_k}{\partial x_2}\right)^2\right]\notag \\
&+\left(v_1 w_2 + v_2 w_1\right)\left[\sum\limits_{k=3}^n \frac{\partial x_k}{\partial x_1}\cdot\frac{\partial x_k}{\partial x_2}\right]
\end{align} 

We see the left side equals the right side and this completes the proof.
\end{proof}

Isometry is a very important in our theory. After proving that projection mapping maintains the isometric properties of the metric, we will naturally find that the length of a curve on a two-dimensional smooth manifold is equal to the length of a new curve on its projection plane in the new metric sense. This is very important in engineering practice, which means that we transform the problem of finding the shortest path in $\mathbb{R}^n$ into a new problem of shortest path retrieval on the $\mathbb{R}^2$ plane. In practice, the reduction of dimension often leads to more convenient calculation and shorter response time, which will be more obviously reflected in the case of higher dimensions. 

Using the proposed Riemannian metric $h=h_{i j} dx^i dx^j$ , the length (cost function) of a smooth curve $\gamma:[a, b] \rightarrow \mathbb{R}^2$ on projection plane $\mathbb{R}^2 $ is obtained by the integral over inner products on $\mathbb{R}^2 $ as shown in (3).

\begin{align}
\hspace{-5cm} 
L(\gamma) = &\int_a^b \left|\gamma^{\prime}(t)\right| dt=\int_a^b \sqrt{h\left(\gamma^{\prime}(t), \gamma^{\prime}(t)\right)} dt \notag\\
= &\int_a^b \sqrt{h\left(\left(x_1^{\prime}(t), x_2^{\prime}(t)\right),\left(x_1^{\prime}(t), x_2^{\prime}(t)\right)\right)} dt \notag\\
= &\int_a^b \sqrt{h_{11}\left[x_1^{\prime}(t)\right]^2+h_{12}\left[x_1^{\prime}(t)\cdot  x_2^{\prime}(t)\right]+h_{21}\left[x_2^{\prime}(t) \cdot x_1^{\prime}(t)\right]+h_{22}\left[x_2^{\prime}(t)\right]^2} dt \notag\\
= &\int_a^b \sqrt{h_{11}\left[x_1^{\prime}(t)\right]^2+2 h_{12}\left[x_1^{\prime}(t) \cdot x_2^{\prime}(t)\right]+h_{22}\left[x_2^{\prime}(t)\right]^2} dt \notag\\
= &\int_a^b \sqrt{\left[1+\sum_{k=3}^n\left(\frac{\partial x_k}{\partial x_1}\right)^2\right] \cdot\left[x_1^{\prime}(t)\right]^2+2\left[ \sum_{k=3}^n\frac{\partial x_k}{\partial x_1}\cdot\frac{\partial x_k}{\partial x_2}\right] \cdot\left[x_1^{\prime}(t)\cdot x_2^{\prime}(t)\right]}\notag\\
&\overline{+\left[1+\sum_{k=3}^n\left(\frac{\partial x_k}{\partial x_2}\right)^2\right] \cdot\left[x_2^{\prime}(t)\right]^2 }dt
\end{align}

Compared with the standard Euclidean distance, the curve length is greater in most cases and is affected by multiple smooth functions $x_k\left(x_1, x_2\right),$ $k=3,\cdots,n$. Specifically, function $x_3\left(x_1, x_2\right)$ describes the relationship between the height of a point on a surface and the $x_1$ and $x_2$ coordinates of the corresponding point. Then according to the above curve length formula, the length on projection plane increases as the curvature of the corresponding two-dimensional surface increases. More intuitively, the greater the “slope” of the surface, the longer the distance of the corresponding region on the projection plane.

Now we have constructed a new Riemannian metric on the two-dimensional plane, and transformed the motion planning problem of higher dimensions into a motion planning problem of two-dimensional plane in the sense of the new metric. Therefore, some existing classical path planning algorithms, such as Dijkstra's algorithm, A* algorithm, RRT algorithm, etc., can be used in this two-dimensional plane with new Riemannian metric. In the next section, we take the RRT* algorithm as an example to show how this new Riemannian metric is specifically applied to the path planning algorithm.

\section{Algorithm}
\label{sec:Algorithm}
In this section, an incremental sampling-based algorithm RRT*-R is proposed as follows, which is designed to solve the Riemannian metric-based path planning problem.

As an incremental motion planning algorithm, asymptotic optimality of RRT* is ensured by the following Theorem \cite{firstRRT*2011sampling}. 
\begin{theorem}\label{thm3}
Let $\textup{cost}_i^{RRT*}$ be a variable that denotes the minimum cost of the path found by RRT* at the end of iteration $i$, here the number of sampled points equals to $i$. If the cost function is additive and continuous, and the space satisfies some assumptions that each point has a ball neighborhood, then the cost of the minimum cost path found by RRT* converges to minimum cost $c^*$ almost surely, i.e.
$$\begin{aligned}
    P\left(\{\lim\limits_{i\rightarrow +\infty}\textup{cost}_i^{RRT*}=c^*\}\right)=1
\end{aligned}$$
\end{theorem}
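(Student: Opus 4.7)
The plan is to prove almost-sure convergence by an $\varepsilon$-tube argument: for every $\varepsilon>0$, construct a tube around a near-optimal path that RRT* is guaranteed to populate densely, then exploit the rewiring step to extract a tree-path through the tube whose cost is within $\varepsilon$ of $c^*$. Since $\varepsilon$ is arbitrary, intersecting countably many probability-one events gives $\lim_i \textup{cost}_i^{RRT*}=c^*$ almost surely.

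First, by continuity of the cost functional and the fact that $c^*$ is an infimum, I would select a feasible path $\sigma_\varepsilon$ from start to goal with cost at most $c^*+\varepsilon/2$. Then, using the ball-neighborhood hypothesis together with continuity, cover $\sigma_\varepsilon$ by a finite sequence of overlapping open balls $B_1,B_2,\ldots,B_{N(\varepsilon)}$ of radius $\rho_\varepsilon>0$ centred at successive points along $\sigma_\varepsilon$, with $B_1$ containing the start and $B_{N(\varepsilon)}$ containing the goal. Additivity and continuity of the cost then imply that any polyline whose $k$-th vertex lies in $B_k$ is feasible and has total cost at most $c^*+\varepsilon$.

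Second, show that each $B_k$ is sampled infinitely often with probability one. Because the sampler is i.i.d.\ with a density bounded below on the bounded configuration space, each $B_k$ has a fixed probability $p_k>0$ of being hit at every iteration. Independence of the iterations lets me apply the second Borel--Cantelli lemma, and a union bound over the finitely many $B_k$ preserves probability one. Combine this with the standard RRT* connection-radius choice $r_n=\gamma(\log n/n)^{1/d}$: although $r_n\to 0$, the expected number of vertices falling in each $B_k$ grows like $np_k$, so eventually consecutive tube-balls contain vertices within distance $r_n$ of one another. At that point the rewiring step links them into a path from start to goal that passes in order through one vertex per $B_k$, hence has cost at most $c^*+\varepsilon$; consequently $\textup{cost}_i^{RRT*}\le c^*+\varepsilon$ for all sufficiently large $i$.

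The main obstacle is the third step's quantitative balance: the connection radius must shrink fast enough to keep the tree sparse yet slowly enough that, once $n$ is large, the samples in adjacent tube-balls are surely within $r_n$. This is where the ball-neighborhood hypothesis (ensuring $\rho_\varepsilon$ can be chosen strictly positive without losing feasibility) and the additive, continuous cost assumption (ensuring that a zig-zag inside each $B_k$ contributes only $O(\rho_\varepsilon)$ excess cost) are crucially used. Once this trade-off is established for a countable sequence $\varepsilon_m\downarrow 0$, the intersection of the corresponding almost-sure events still has probability one, yielding $\lim_{i\to\infty}\textup{cost}_i^{RRT*}=c^*$ with probability one and finishing the proof.
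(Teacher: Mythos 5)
The paper does not actually prove this statement: Theorem~\ref{thm3} is quoted from the RRT* literature (the reference for the original RRT* asymptotic-optimality result) and is used as a black box, so there is no in-paper argument to compare yours against. Judged on its own terms, your sketch correctly reproduces the overall architecture of the standard proof --- pick a feasible path of cost at most $c^*+\varepsilon/2$, cover it with balls, show the balls acquire tree vertices, use rewiring to extract a near-optimal tree path, and intersect over a countable sequence $\varepsilon_m\downarrow 0$ --- but it has a genuine gap exactly at the step you yourself flag as ``the main obstacle,'' and that step is the heart of the theorem, not a technicality.

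Concretely: you cover $\sigma_\varepsilon$ with finitely many balls of \emph{fixed} radius $\rho_\varepsilon$ and argue via Borel--Cantelli that each is hit infinitely often. That is true but insufficient, because two vertices lying in adjacent fixed balls can be separated by a distance on the order of $\rho_\varepsilon$, which does not shrink, while the connection radius $r_n\to 0$; so ``eventually consecutive tube-balls contain vertices within distance $r_n$ of one another'' does not follow from the expected count $np_k$ growing --- expectations say nothing about the spatial gaps between samples. The correct argument tiles the path with a \emph{growing} number $M_n$ of balls whose radii are proportional to $r_n$ itself, and shows that with probability one, for all sufficiently large $n$, \emph{every} one of these shrinking balls simultaneously contains a vertex; this is where the precise $r_n=\gamma(\log n/n)^{1/d}$ scaling with $\gamma$ above a dimension-dependent threshold enters, via a Borel--Cantelli estimate over a family of events whose individual probabilities decay. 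One then still needs an induction along the ball sequence showing that the rewiring step keeps the cost-to-come of the vertex in the $k$-th ball within a controlled error of the optimal cost-to-come, before summing the errors using additivity and continuity. Without these two ingredients the proof does not close; as written, your argument establishes only that the tree accumulates vertices near $\sigma_\varepsilon$, not that it connects them into a path of cost at most $c^*+\varepsilon$.
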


It's easy to verify that the cost function and configuration space satisfy the conditions mentioned above, so the tasks can be processed by the proposed RRT*-R. Unlike the common cost function given by Euclidean distance, in the proposed Riemannian-based RRT*-R algorithm, the cost function between $v_1$ and $v_2$ is replaced by $\textup{Line-R}(v_1, v_2)$, where $v_1=(x^1_1,x^1_2), v_2=(x^2_1,x^2_2)\in C_{free} \subset \mathbb{R}^2$. In fact, $\textup{Line-R}(v_1, v_2)$ is the length of path $r(t), t\in [0, 1]$ in ${\mathbb{R}^2}$ with new Riemannian metric $h$. Here $r(t)$ is defined as:
\begin{align*}
    r(t) &= (x_1(t), x_2(t), x_3(t), \cdots, x_n(t)),\; t\in[0, 1]\\
    x_i(t) &= t * x^2_i + (1-t) * x^1_i, \; i=\{1, 2\}\\
    x_j(t) &= x_j(x_1(t), x_2(t)),\; j\geq 3
\end{align*}

Since the length on ${\mathbb{R}^2}$ with new Riemannian metric $h$ is calculated by an integration, it's necessary to choose an appropriate numerical integration method for the purpose of getting high integration accuracy and acceptable computing speed. Here we use Newton-Cotes integration method, which is a fifth-order algebraic accuracy algorithm, to get the numerical result of $\int_0^1|\frac{dr(t)}{dt}|dt$ as long as the edge connecting vertices $v_1$ and $v_2$ doesn't traverse the obstacle space $C_{obs}$ (see Alg. 1).

Similar to the standard RRT* algorithm \cite{firstRRT*2011sampling}, the proposed RRT*-R algorithm consists of a main procedure (see Alg.2) and its \textup{Extend} procedure (see Alg.3 ). The main idea of RRT*-R is to construct a tree incrementally until it reaches the goal region. The difference is that in the proposed RRT*-R algorithm, the length in the sense of new Riemannian metric is chosen as the cost instead of Euclidean length. Since the new Riemannian metric contains environmental information, the Riemannian metric-based RRT*-R algorithm performs differently from the standard Euclidean metric-based RRT* algorithm.

\begin{algorithm}
\caption{$\textup{Line-R}(v_1=(x^1_1,x^1_2), v_2=(x^2_1,x^2_2))$.}\label{alg:alg1}
\begin{algorithmic}
\STATE 
\STATE 
$\hat{x^k_i}=\frac{k}{4}x^2_i+\frac{4-k}{4}x^1_i, k\in\{0,1,2,3,4\}, i\in\{1,2\};$
\STATE  
$\hat{v}_k=\left(\hat{x^k_1},\hat{x^k_2},x_3(\hat{x^k_1},\hat{x^k_2}),\cdots,x_m(\hat{x^k_1},\hat{x^k_2})\right),k\in$\\
$\{0, 1, 2, 3, 4\};$
\STATE  $r_k=\sqrt{\sum\limits_{i=1}^{2}\sum\limits_{j=1}^{2} h_{ij}(\hat{x^k_1},\hat{x^k_2}) (x^2_i-x^1_i)(x^2_j-x^1_j)},$\\
$k\in\{0,1,2,3,4\};$
\STATE  
$s=\frac{7}{90}r_0+\frac{16}{45}r_1+\frac{2}{15}r_2+\frac{16}{45}r_3+\frac{7}{90}r_4;$
\STATE \textbf{return} $s$
\end{algorithmic}
\label{alg1}
\end{algorithm}

\begin{algorithm}
\caption{Body of RRT*-R.}\label{alg:alg2}
\begin{algorithmic}
\STATE 
\STATE  
$V \gets \{ v_{\textup{init}} \};E \gets \varnothing;i\gets 0;$ 
\WHILE{$i < N$}
\STATE  
$G\gets (V, E);v_{\textup{rand}} \gets \textup{Sample{(i)}}; i \gets i+1;$
\STATE  
$(V,E) \gets \textup{Extend}(G,v_{\textup{rand}})$
\ENDWHILE
\end{algorithmic}
\label{alg2}
\end{algorithm}

\begin{algorithm}
\caption{$\textup{Extend}_{RRT*-R}(G,v)$.}\label{alg:alg3}
\begin{algorithmic}
\STATE 
\STATE  $V^{\prime} \gets V;E^{\prime} \gets E;$ 
\STATE $v_{\textup{nearest}} \gets \textup{Nearest}(G, v);$
\STATE $v_{\textup{new}} \gets \textup{Steer}(v_{\textup{nearest}}, v)$
\IF{$\textup{ObstacleFree}(v_{\textup{nearest}},v_{\textup{new}})$}
\STATE  $V^{\prime} \gets V^{\prime}\cup 
        \{v_{\textup{new}}\};v_{\textup{min}} \gets v_{\textup{nearest}};$
\STATE $V_{\textup{near}} \gets \textup{Near}(G, v_{\textup{new}},|V|);$
\FOR{all $v_{\textup{near}}\in V_{\textup{near}}$}
\IF{$\textup{ObstacleFree}(v_{\textup{near}},v_{\textup{new}})$}
\STATE $c^{\prime}\gets \textup{Cost}(v_{\textup{near}})+\textup{Line-R}(v_{\textup{near}},v_{\textup{new}});$ 
\IF{$c^{\prime}<\textup{Cost}(v_\textup{new})$}
\STATE $v_{\textup{min}}\gets v_{\textup{near}}$
\ENDIF
\ENDIF
\ENDFOR
\STATE $E^{\prime} \gets E^{\prime}\cup \{(v_{\textup{min}},v_{\textup{new}})\};$
\FOR{all $v_{\textup{near}}\in V_{\textup{near}}\backslash\{v_{\textup{min}}\}$}
\IF{$\textup{ObstacleFree}(v_{\textup{new}},v_{\textup{near}})$ and $\textup{Cost}(v_{\textup{near}})$\\
$>\textup{Cost}(v_{\textup{new}})+\textup{Line-R}(v_{\textup{new}},v_{\textup{near}})$}
\STATE $v_{\textup{parent}}\gets \textup{Parent}(v_{\textup{near}});$
\STATE $E^{\prime} \gets E^{\prime}\backslash \{(v_{\textup{parent}},v_{\textup{near}})\};E^{\prime} \gets E^{\prime}\cup \{(v_{\textup{new}},v_{\textup{near}})\};$
\ENDIF
\ENDFOR
\ENDIF
\STATE \textbf{return} $G^{\prime}=(V^{\prime},E^{\prime})$
\end{algorithmic}
\label{alg3}
\end{algorithm}

\section{Simulation}
\label{sec:Simulation}
This section is devoted to an experimental study of the algorithms. Examples of two-dimensional surfaces in three-dimensional and four-dimensional spaces are considered. Comparison experiments with the original RRT* algorithm using Euclidean distances in high-dimensional spaces are included. Moreover, RRT*-R algorithms and the geodesic are compared with respect to their length of the solution achieved. The algorithms are run in high dimensional parameter space, which are essentially 2-dimensional surfaces embedded in $\mathbb{R}^n$. The RRT*-R algorithms are run in a square environment $[-1,11]\times [-1,11]$ with start point $(0,0)$ and end point $(10,10)$. All the algorithms were implemented in Python 3.10 and run on a computer with 3.2 GHz AMD Ryzen7 processor and 32GB DDR4 RAM running the Windows operating system.

In order to verify the accuracy of the RRT*-R algorithm in retrieving the path on the projection plane $(\mathbb{R}^2,h )$ from another perspective, we take the shortest geodetic length between the starting point and the ending point on the projection plane $(\mathbb{R}^2,h)$ as its reference. The derivation and calculation process of geodesic equations are shown in Appendix. Since the solution of the geodesic equation is determined by the initial position and the initial velocity direction, by traversing more than 200 geodesics (green lines) with different initial velocity directions within the angle range of $[0,\frac{\pi}{2}]$, we select the geodesics (blue lines) that passed through the neighborhood with a radius of 0.5 near the end point, and then find the geodesic with the shortest length (red line) in the set of blue geodesics, for example, Figure~\ref{3d1-3}. The length of the red geodesic on the projection plane $(\mathbb{R}^2,h)$ is the theoretical shortest route length between two points in robot path planning problem.
Besides the geodesic line, we also provide the original RRT* algorithm using Euclidean distance as a comparative experiment of the RRT*-R algorithm in terms of running performance and optimality.

\subsection{Experiments in three-dimensional space}
In the first 3-D application scenario, for the convenience of simulation, we choose the function $x_3 = 5\cdot e^{\left [ -\frac{1}{10}(x_1-5)^2-\frac{1}{10}(x_2-5)^2   \right ] } $ with Gaussian distribution shape to simulate the “hill” in three-dimensional space. In Figure~\ref{3d1peak}, the trees maintained by RRT*-R (Figure~\ref{3d1-2}) and its corresponding shortest route in 3-D space (Figure~\ref{3d1-1}) are shown respectively. It can be observed that the route chosen by RRT*-R with a length of 16.19926 (Figure~\ref{3d1-2}) is only about 0.05 different from the shortest geodesic with a length of 16.14253 (Figure~\ref{3d1-3}) obtained after 200 traversals on the projection plane $(\mathbb{R}^2,h)$. With the same number of sampling points and the same step size as in the RRT*-R algorithm, the path length retrieved by the original RRT* algorithm using Euclidean distance on the surface in 3-D Euclidean space is 16.5047, as shown in figure~\ref{3d1-4}. Moreover, it can be seen that the geometric shapes of the three curves are also almost the same. In order to increase the difficulty of retrieving paths by RRT*-R algorithm, we construct a “three-peak surface” (Figure~\ref{3d3-1}) similar to the “single-peak surface” above. The surface function is $x_3 = 7\cdot e^{\left [ -\frac{1}{2}(x_1-3)^2-\frac{1}{2}(x_2-3)^2  \right ] }+6\cdot e^{\left [ -\frac{1}{2}(x_1-7)^2-\frac{1}{2}(x_2-3)^2   \right ] }+5\cdot e^{\left [ -\frac{1}{2}(x_1-5)^2-\frac{1}{2}(x_2-7)^2   \right ] } $. In this case, The RRT*-R algorithm choose a route (Figure~\ref{3d3-2}~\ref{3d3-1}) with a length of 15.44211, which is only about 0.07 larger than the shortest blue geodesic (Figure~\ref{3d3-3}) with a length of 15.37167, obtained by traversing 400 times. The path (Figure~\ref{3d3-4}) length retrieved on the surface by the original RRT* algorithm using Euclidean distance is 16.1702, which is 0.73 longer than the path obtained by the RRT*-R algorithm. Obviously, when the RRT*-R algorithm retrieves a path on the projection plane $\mathbb{R}^2 $ that we constructed with the new Riemannian metric $h$, it chooses to advance in the low places between the mountains rather than over the peaks.

\begin{figure*}
   \centering
   \subfloat[]{
    \includegraphics[width=0.25\linewidth]{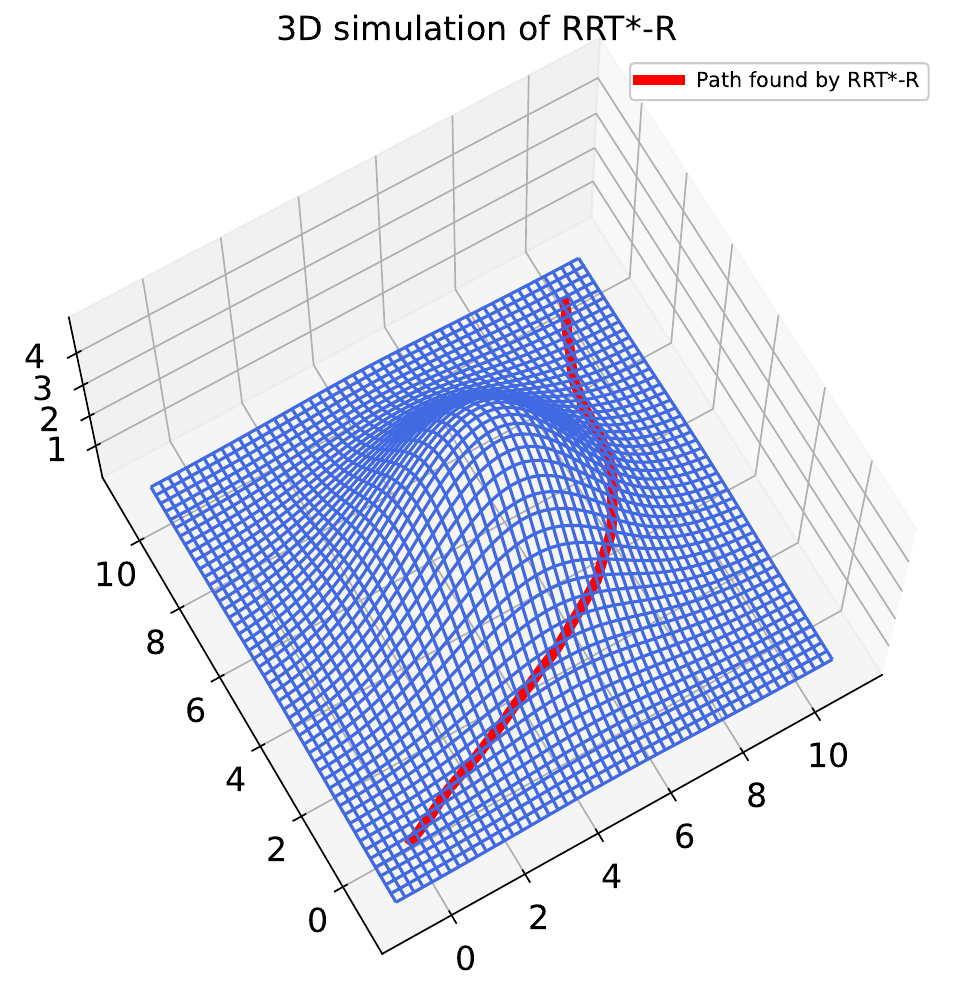}
    \label{3d1-1}
    }
   \subfloat[]{
      \includegraphics[width=0.25\linewidth]{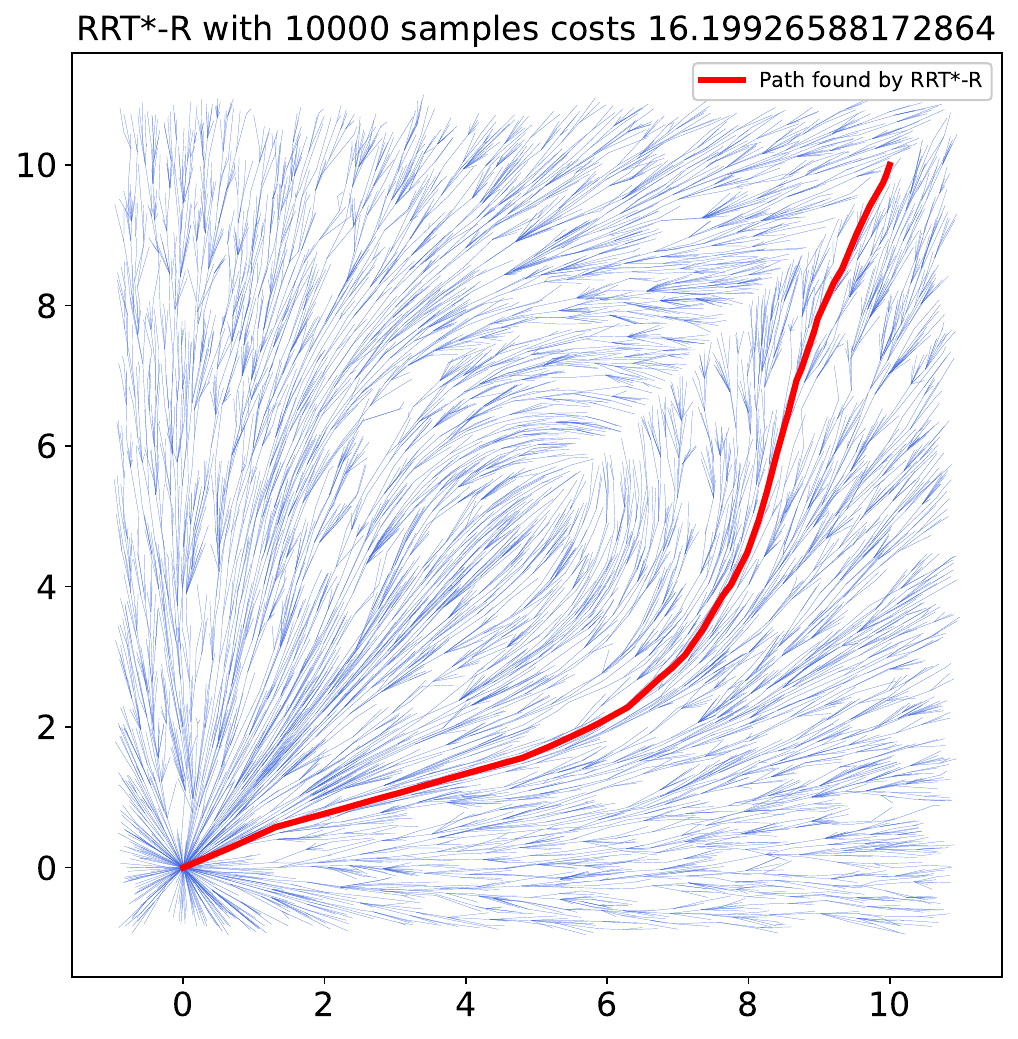}
      \label{3d1-2}
    }
    \subfloat[]{
      \includegraphics[width=0.25\linewidth]{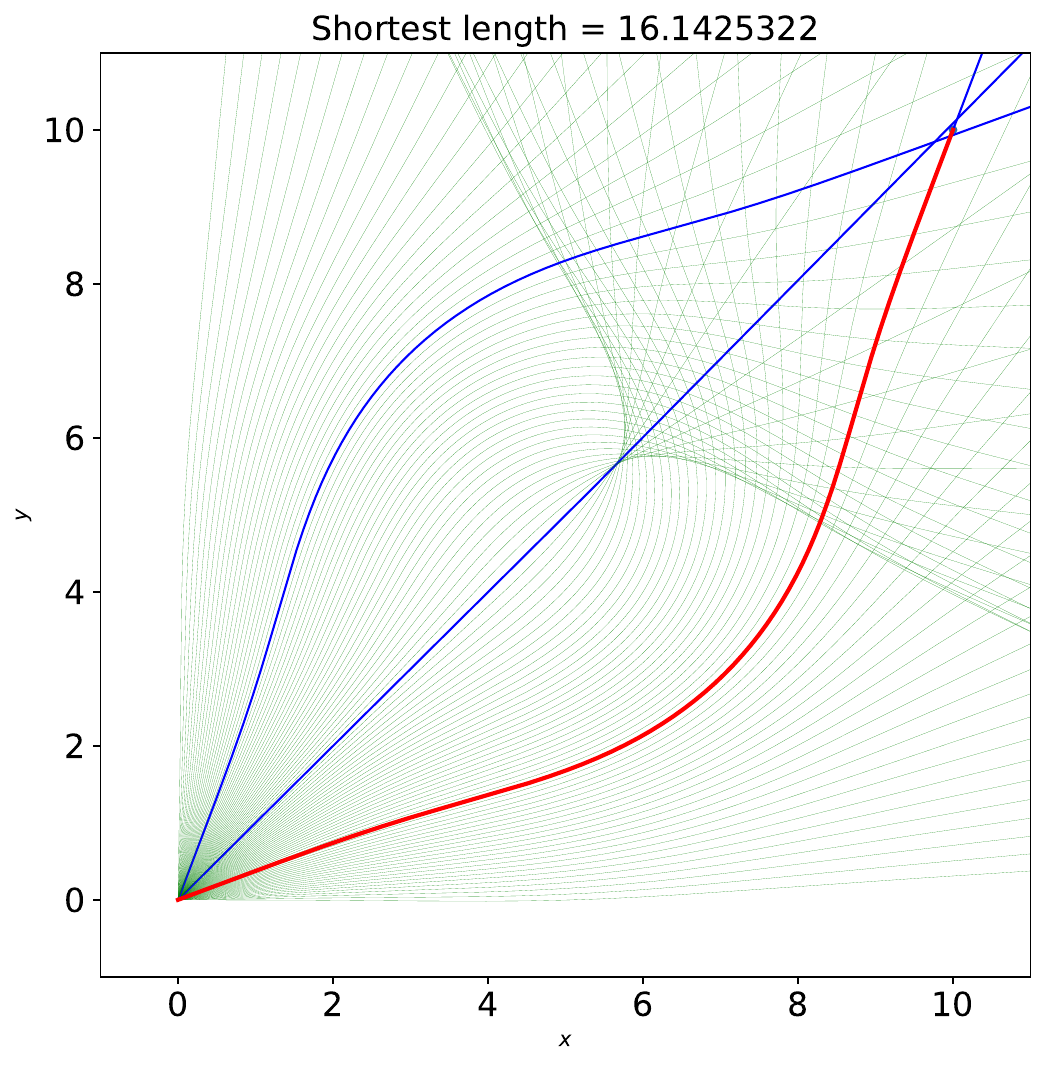}
      \label{3d1-3}
    }
    \subfloat[]{
      \includegraphics[width=0.25\linewidth]{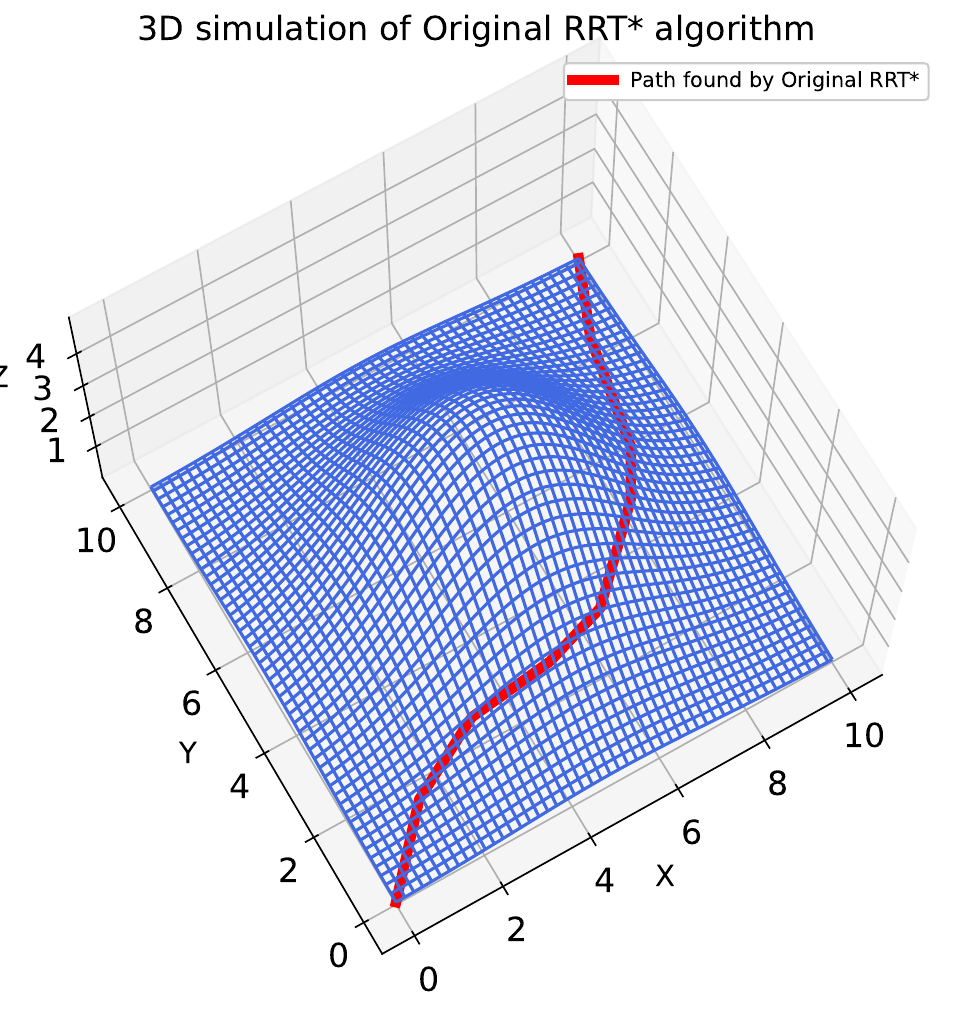}
      \label{3d1-4}
    }
   \captionsetup{justification=centering} 
   \caption{Simulation with 3-D scenario (one peak). (b) is obtained by RRT*-R taking 10000 samples. (a) is the preimage of the path in (b) under the projection map. (c) is geodesic path. (d) is obtained by original RRT* algorithm using Euclidean distance}
   \label{3d1peak}
\end{figure*}

\begin{figure*}
   \centering
   \subfloat[]{
    \includegraphics[width=0.25\linewidth]{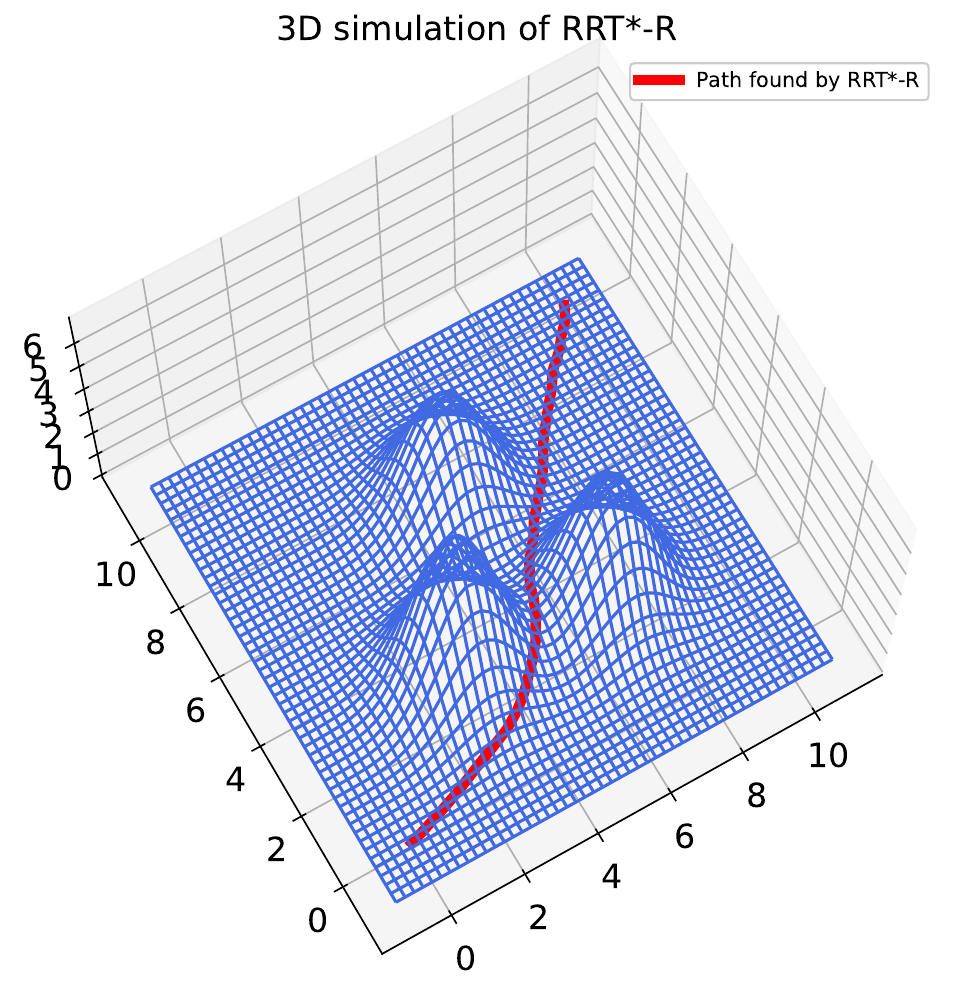}
    \label{3d3-1}
    }
   \subfloat[]{
      \includegraphics[width=0.25\linewidth]{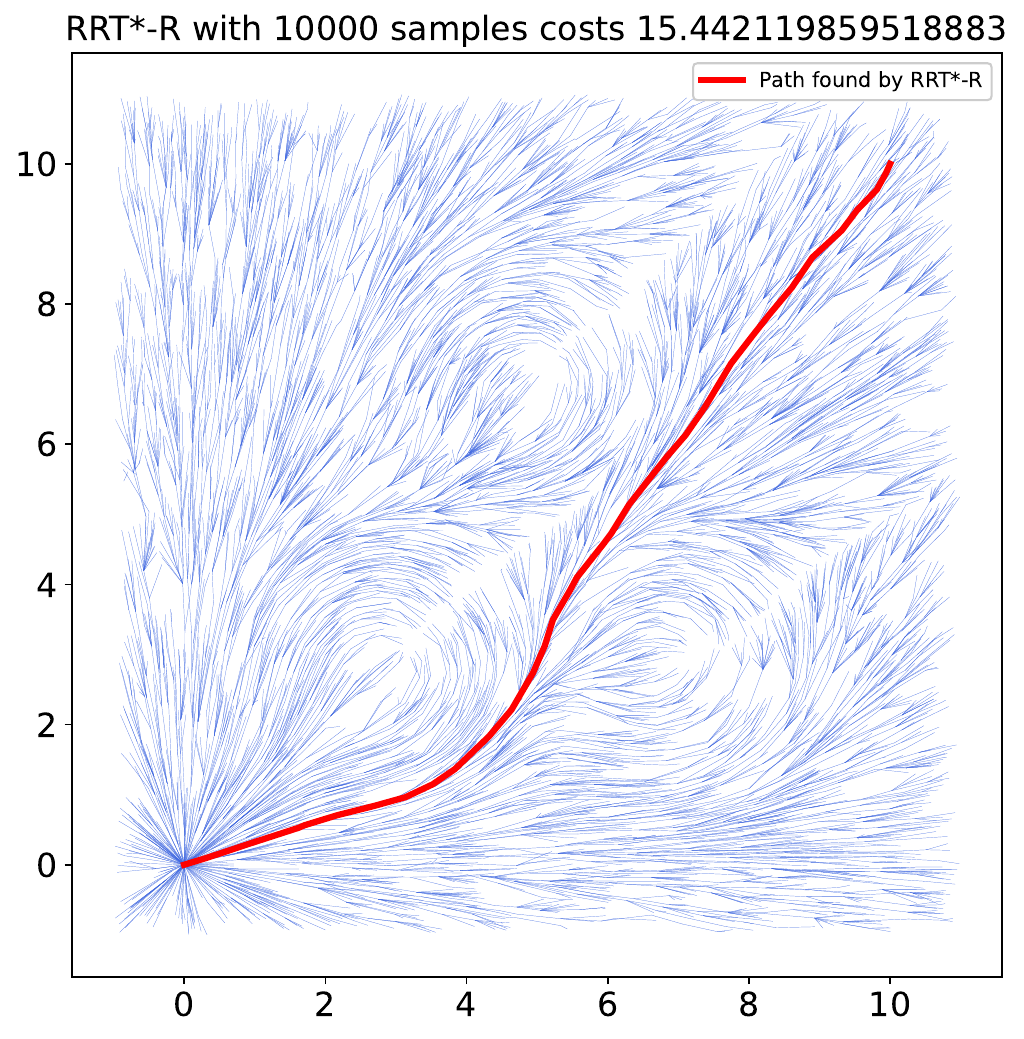}
      \label{3d3-2}
    }
    \subfloat[]{
      \includegraphics[width=0.25\linewidth]{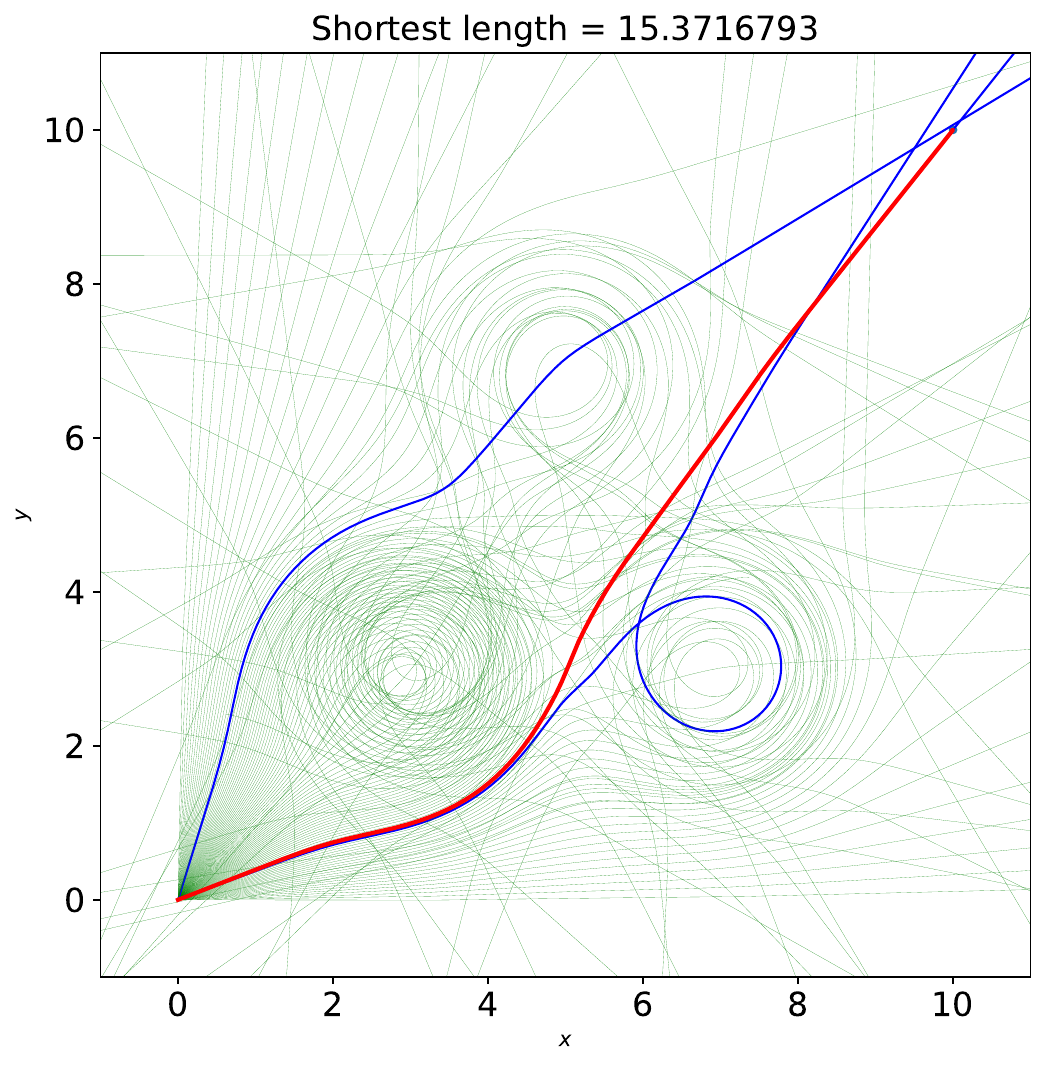}
      \label{3d3-3}
    }
    \subfloat[]{
      \includegraphics[width=0.25\linewidth]{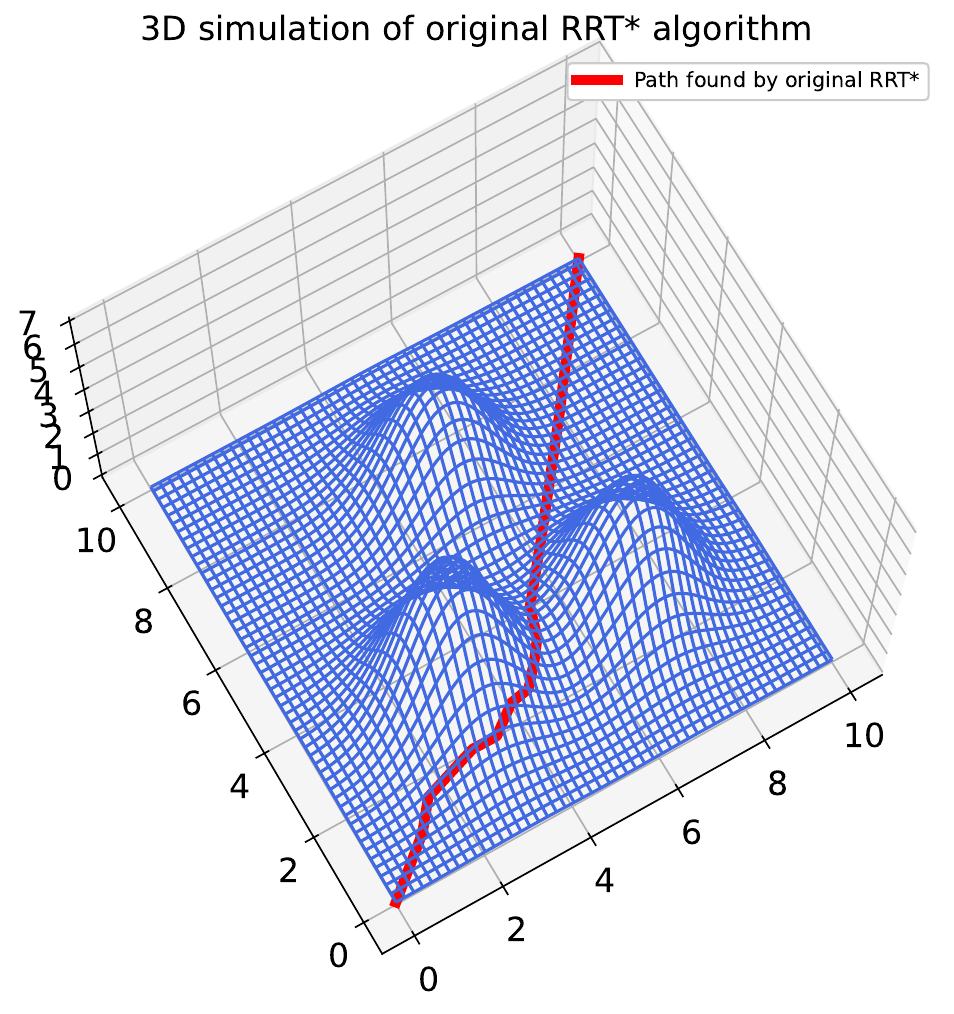}
      \label{3d3-4}
    }
   \captionsetup{justification=centering} 
   \caption{Simulation with 3-D scenario (three peaks). (b) is obtained by RRT*-R taking 10000 samples. (a) is the preimage of the path in (b) under the projection map. (c) is geodesic path. (d) is obtained by original RRT* algorithm using Euclidean distance}
   \label{3d3peak}
\end{figure*}

\subsection{Experiments in four-dimensional space}
In the second 4-D scenario, the impact of changes in ground friction resistance is also taken into consideration, which forms a path planning problem on a two-dimensional surface in four-dimensional workspace. When a robot moves on the ground, switching between different ground materials (such as soil, sand, gravel, grass, concrete, etc.) will affect the friction force on the robot, thereby affecting energy consumption. We use the concept of “ground resistance function” to describe the influence of the above ground factors on the robot's moving resistance. We choose $x_3 = 5\cdot e^{\left [ -\frac{1}{2}(x_1-3)^2-\frac{1}{2}(x_2-3)^2   \right ] }+5\cdot e^{\left [ -\frac{1}{2}(x_1-7)^2-\frac{1}{2}(x_2-3)^2   \right ] }+5\cdot e^{\left [ -\frac{1}{2}(x_1-3)^2-\frac{1}{2}(x_2-7)^2   \right ] } +5\cdot e^{\left [ -\frac{1}{2}(x_1-7)^2-\frac{1}{2}(x_2-7)^2   \right ] }$ as the height function of the surface and $x_4 = 3\cdot e^{\left [ -\frac{1}{2}(x_1-5)^2-\frac{1}{2}(x_2-8)^2   \right ] }+3\cdot e^{\left [ -\frac{1}{2}(x_1-5)^2-\frac{1}{2}(x_2-2)^2   \right ] }$ as the resistance function of the ground. In Figure~\ref{4d6peak}, the ground resistance information at each position on the surface is characterized by the color of that point (Figure~\ref{4d-1}). It can be seen that this four-dimensional path planning problem has four height peaks and two ground resistance peaks. The route with a length of 17.18928 retrieved by the RRT*-R algorithm (Figure~\ref{4d-2}~\ref{4d-1}) successfully avoids all these six peaks, and its length is only about 0.16 error from the shortest geodesic (Figure~\ref{4d-3}) with a length of 17.02196 obtained by 600 traversals on the projection plane $(\mathbb{R}^2,h )$. 
However, the path length retrieved by the original RRT* algorithm(Figure~\ref{4d-4}) using Euclidean distance on the surface in 4-D Euclidean space is 19.438, with the same number of sampling points and the same step size as in the RRT*-R algorithm.
It is obvious that the error between the original RRT* algorithm and RRT*-R algorithm will increase significantly when the space dimension is increased from three to four dimensions.
As the dimension of the workspace increases, the RRT* algorithm may face larger errors and worse performance, especially in terms of the smoothness and optimality of the path.
To intuitively analyze the influence of resistance function of the ground $x_4$ on path planning, we conducted a three-dimensional comparative experiment (Figure~\ref{4dc-1}~\ref{4dc-2}), which has the same surface height function $x_3$ as the above four-dimensional experiment, but without ground friction resistance information (represented by the gray surface). This time, the RRT*-R algorithm chooses a route with less length, but this route will pass through the ground resistance peak of the four-dimensional experiment as shown in Figure~\ref{4dc-3} and Figure~\ref{4dcompare}. In other words, by constructing the Riemannian metric $h$ containing the information of ground friction resistance $x_{4}$ on the projection plane, RRT*-R not only follows the path with small height fluctuations, but also avoids areas where ground resistance changes dramatically. 

\begin{figure*}
   \centering
   \subfloat[]{
    \includegraphics[width=0.4\linewidth]{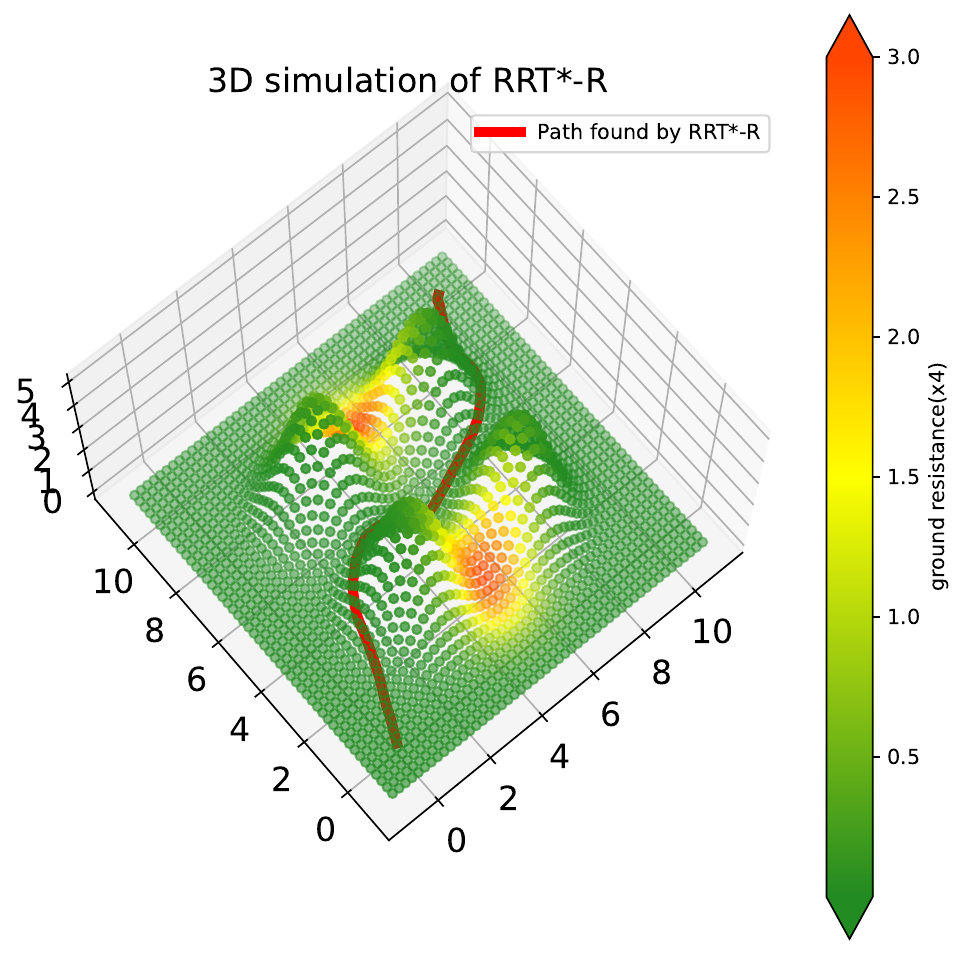}
    \label{4d-1}
    }
   \subfloat[]{
      \includegraphics[width=0.4\linewidth]{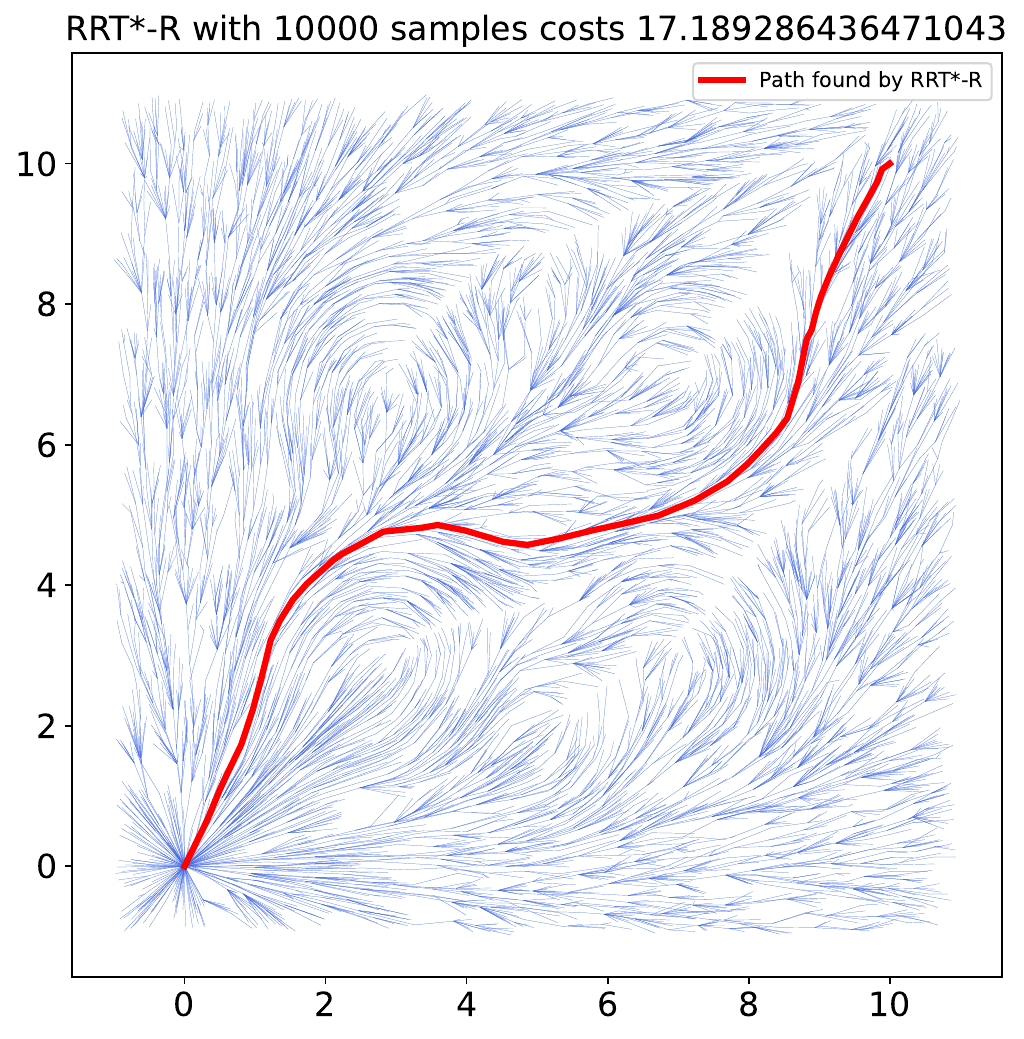}
      \label{4d-2}
    }\\
    \subfloat[]{
      \includegraphics[width=0.4\linewidth]{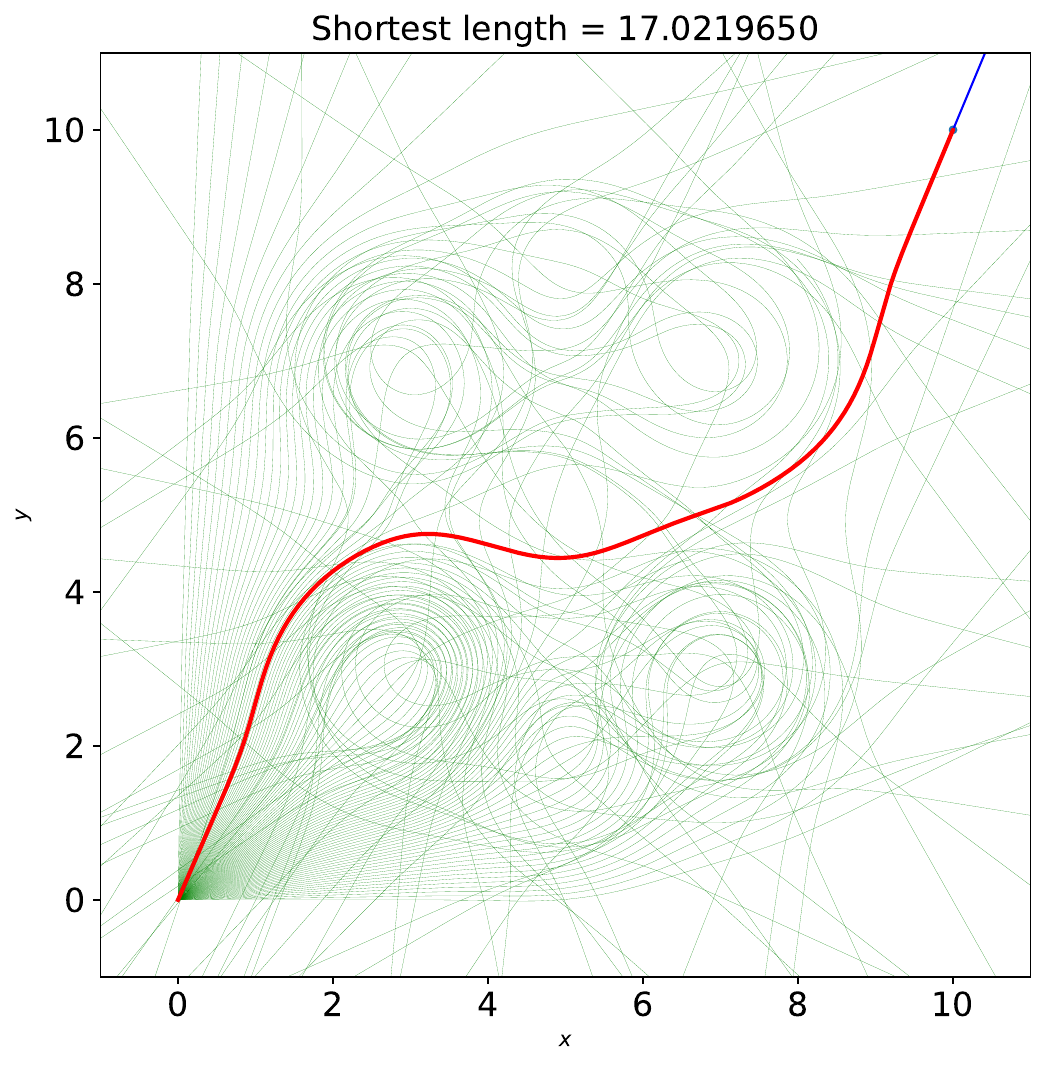}
      \label{4d-3}
    }
    \subfloat[]{
      \includegraphics[width=0.4\linewidth]{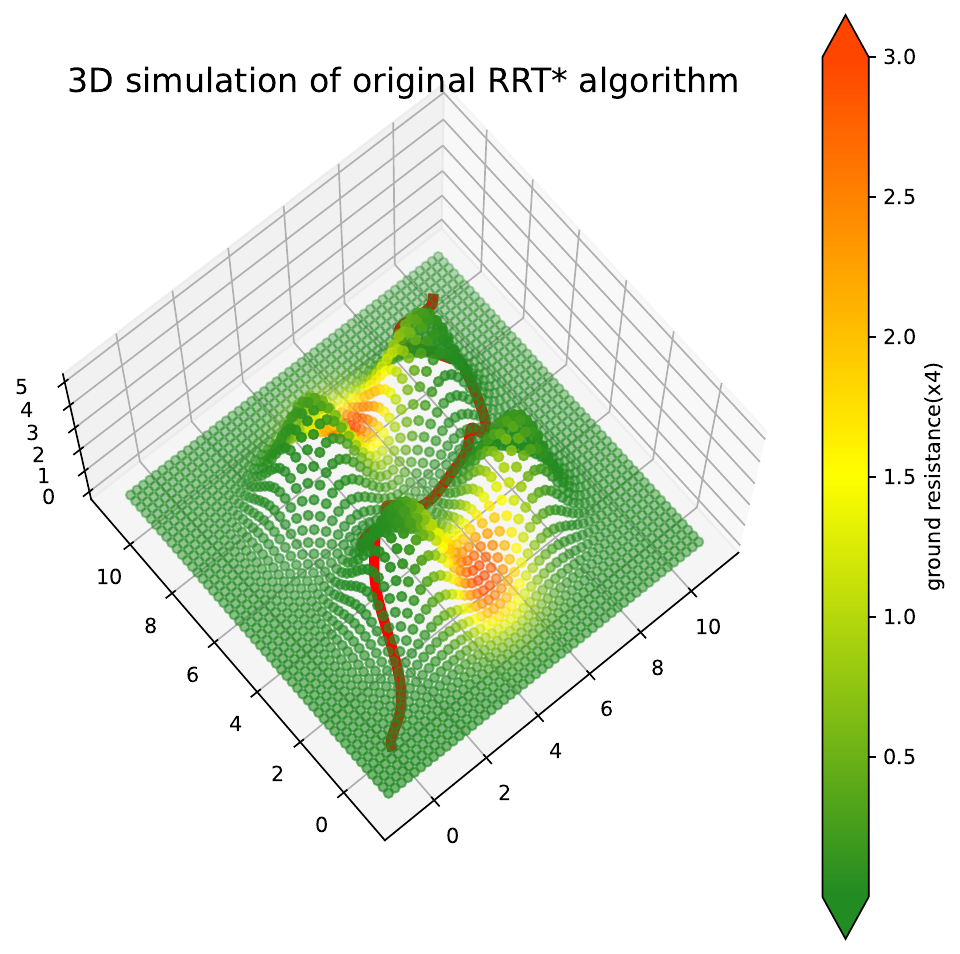}
      \label{4d-4}
    }
   \captionsetup{justification=centering} 
   \caption{Simulation with 4-D scenario (six peaks). (b) is obtained by RRT*-R taking 10000 samples. (a) is the preimage of the path in (b) under the projection map. (c) is geodesic path. (d) is obtained by original RRT* algorithm using Euclidean distance}
   \label{4d6peak}
\end{figure*}

\begin{figure*}
   \centering
   \subfloat[]{
    \includegraphics[width=0.3\linewidth]{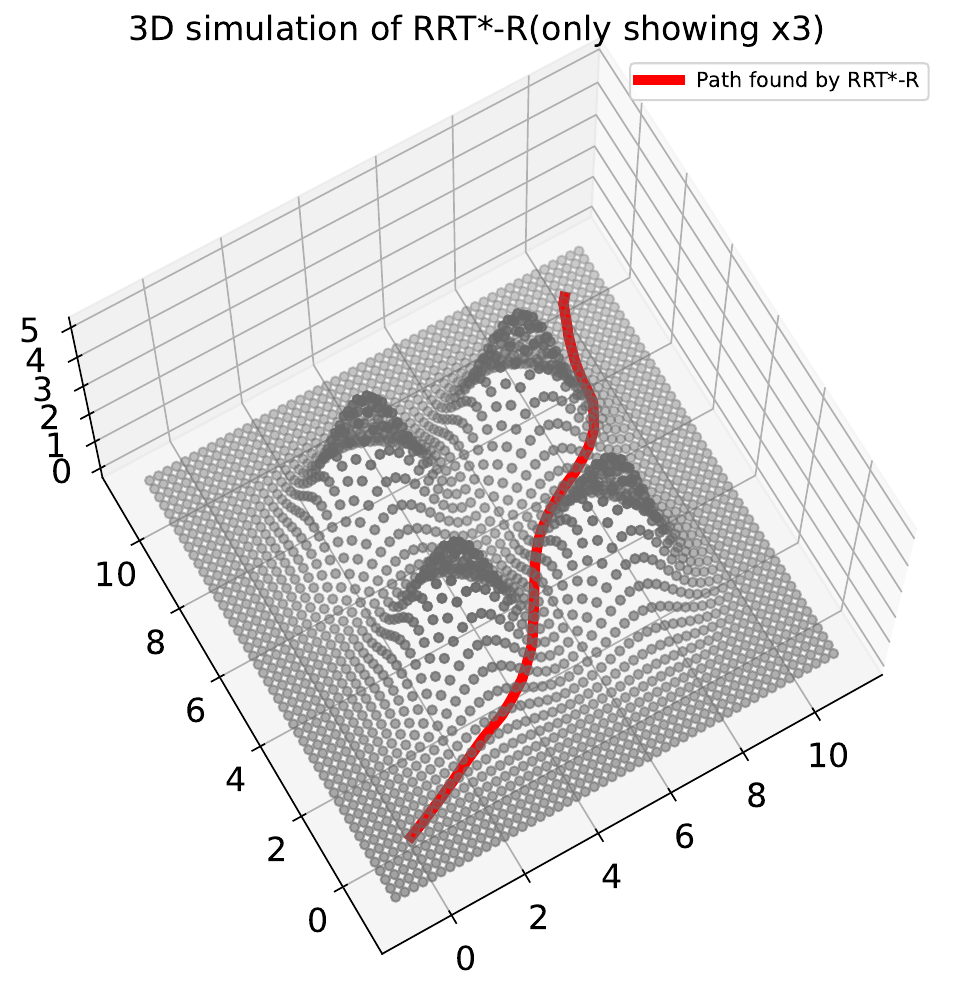}
    \label{4dc-1}
    }
   \subfloat[]{
      \includegraphics[width=0.3\linewidth]{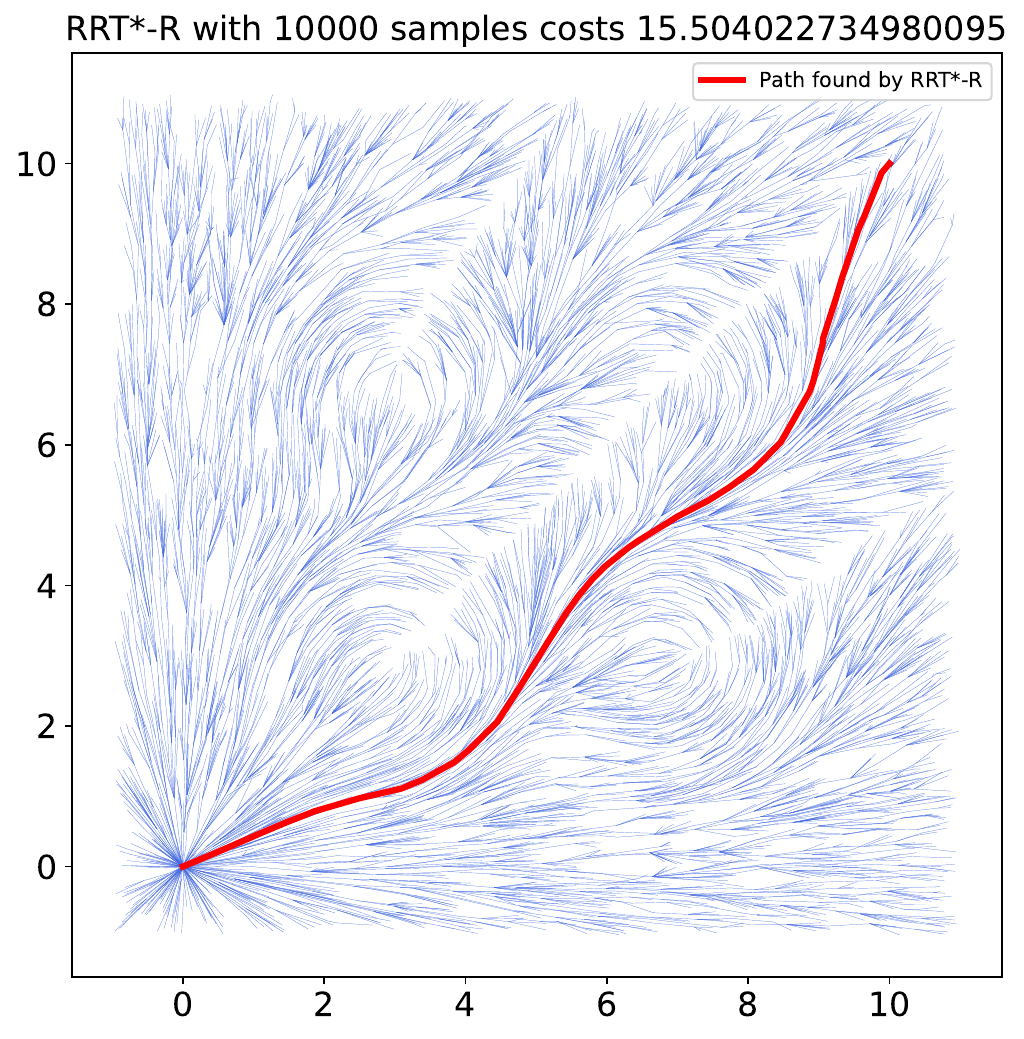}
      \label{4dc-2}
    }
    \subfloat[]{
      \includegraphics[width=0.3\linewidth]{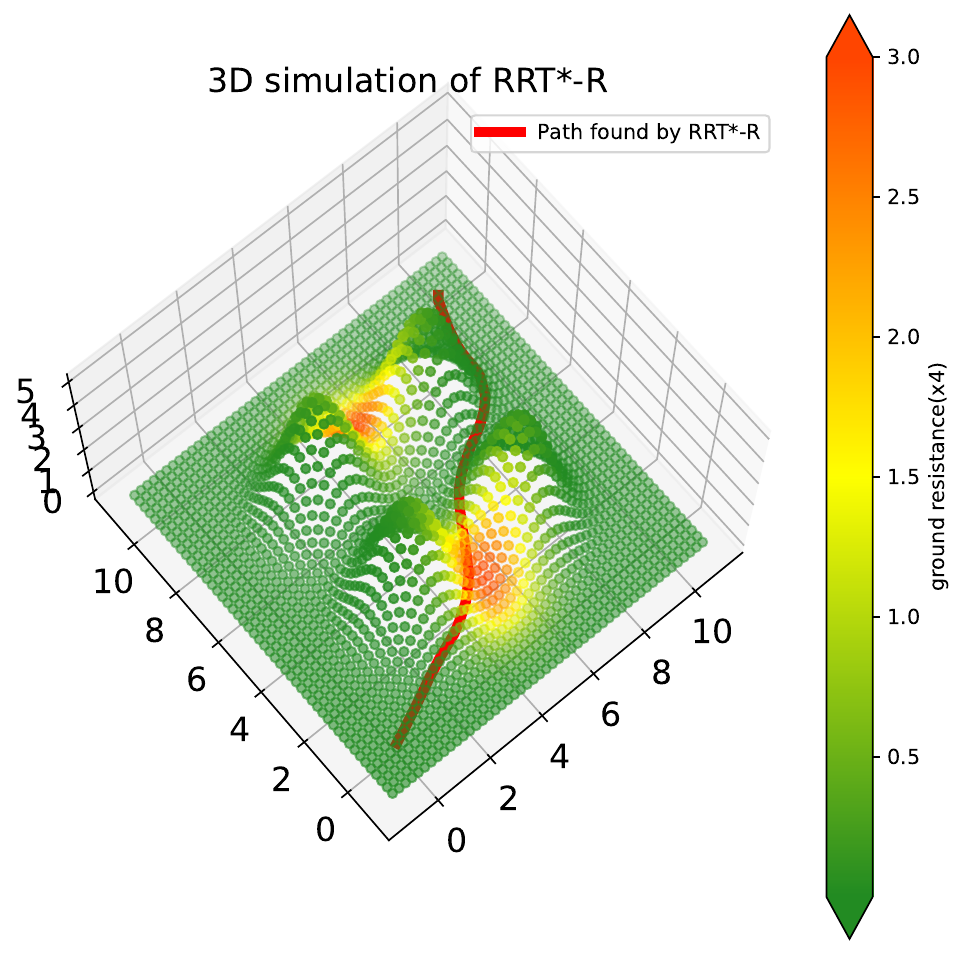}
      \label{4dc-3}
    }
   \captionsetup{justification=centering} 
   \caption{Comparative experiment of 4-D scenario. (b) is obtained by RRT*-R taking 10000 samples. (a) is the preimage of the path in (b) under the projection map. (c) is obtained by replacing the gray dot surface in (a) with a colored dot surface containing ground resistance information and keep the path in (a) unchanged. }
   \label{3d4peak}
\end{figure*}

\begin{figure}
    \centering
    \includegraphics[width=0.7\linewidth]{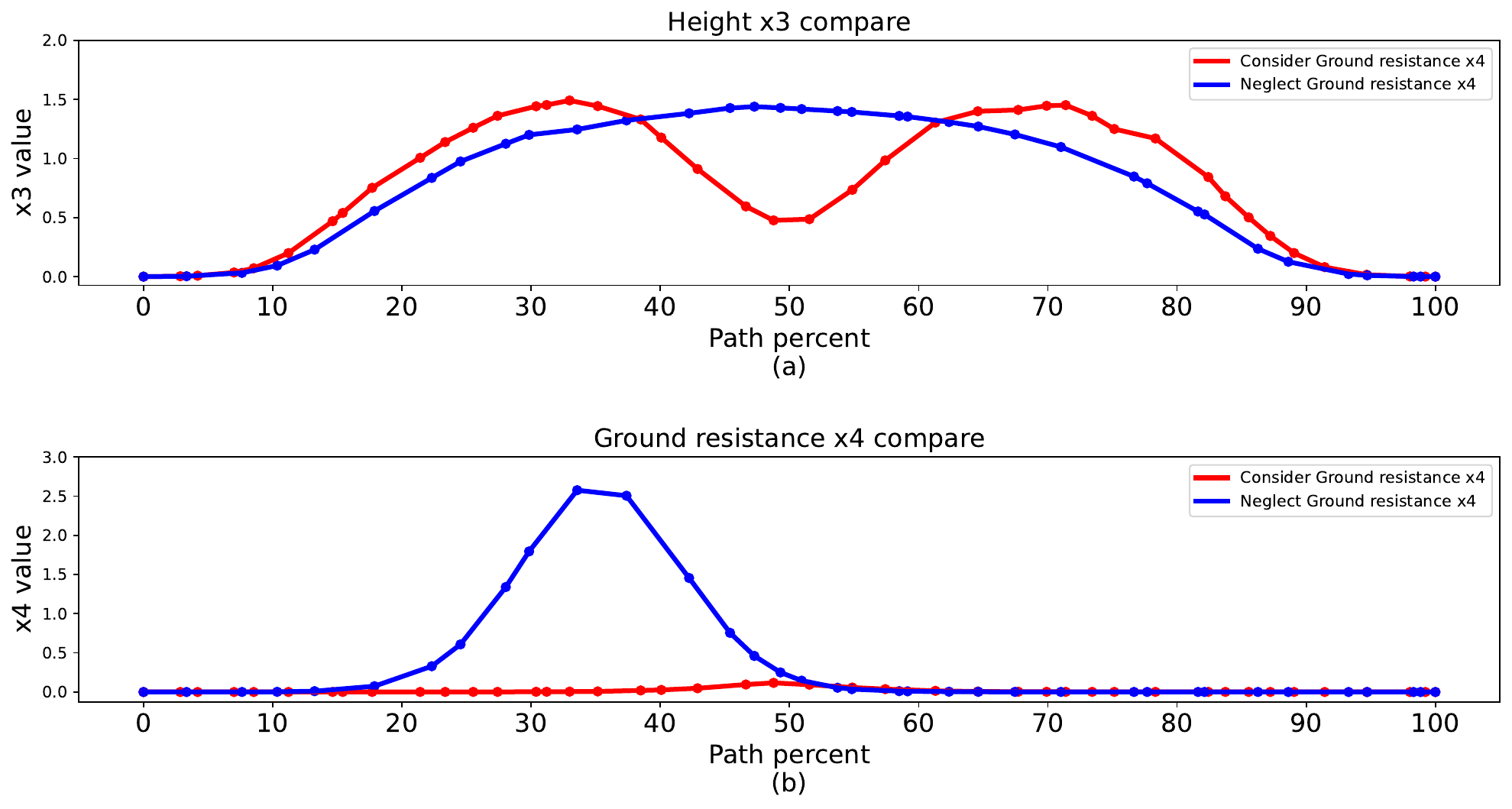}
    \caption{Variation along the path in 4-D scenario and its comparative experiment. (a) Height (b) Ground friction resistance }
    \label{4dcompare}
\end{figure}

\subsection{Analysis}
In this section, the stability and convergence properties of the RRT*-R algorithm are investigated through repeatability tests. Firstly, since the search for the optimal path of the algorithm depends on the selection of uniform sampling points, in the same experiment, we run the algorithm 150 times with the same input, compare the output length of each run, and draw a box plot to observe the distribution of the overall path length data. Figure~\ref{boxline} depicts a quantitative comparison in scenarios with different dimensions to show that the proposed method can maintain stability in different dimensions. The height function in the first 3D experiment was selected $x_3 = 5\cdot e^{\left [ -\frac{1}{2}(x_1-5)^2-\frac{1}{2}(x_2-8)^2   \right ] }+5\cdot e^{\left [ -\frac{1}{2}(x_1-8)^2-\frac{1}{2}(x_2-5)^2   \right ] }+5\cdot e^{\left [ -\frac{1}{2}(x_1-2)^2-\frac{1}{2}(x_2-2)^2   \right ] } +5\cdot e^{\left [ -\frac{1}{2}(x_1-8)^2-\frac{1}{2}(x_2-2)^2   \right ] }$, and the second 4D experiment selected the same height function with an additional resistance function of the ground $x_4 = 3\cdot e^{\left [ -\frac{1}{2}(x_1-2)^2-\frac{1}{2}(x_2-8)^2   \right ] }+3\cdot e^{\left [ -\frac{1}{2}(x_1-8)^2-\frac{1}{2}(x_2-8)^2   \right ] }+7\cdot e^{\left [ -\frac{1}{2}(x_1-5)^2-\frac{1}{2}(x_2-5)^2   \right ] }$. Figure~\ref{boxline} shows that the proposed algorithm has a small variance and dispersion in two dimensional scenarios. 

The convergence property of the RRT*-R algorithm is shown in Figure~\ref{converge}. Except for the number of sampling points, we keep other inputs unchanged, start the experiment with 2000 sampling points, and gradually increase the number of sampling points, such as 4000, 6000, until 20000. Repeat the experiment 10 times for each number of sampling points and average the output 10 path lengths. The figure shows that as the number of sampling points increases, the data output by the RRT*-R algorithm gradually converge to the path length of 15.12566 (black dotted line) obtained by traversing the geodesic. Therefore, almost all RRT*-R runs converge to the optimal solution as expected. 

\begin{figure}
    \centering
    \subfloat[]{
       \label{boxline}
       \includegraphics[width=0.4\linewidth]{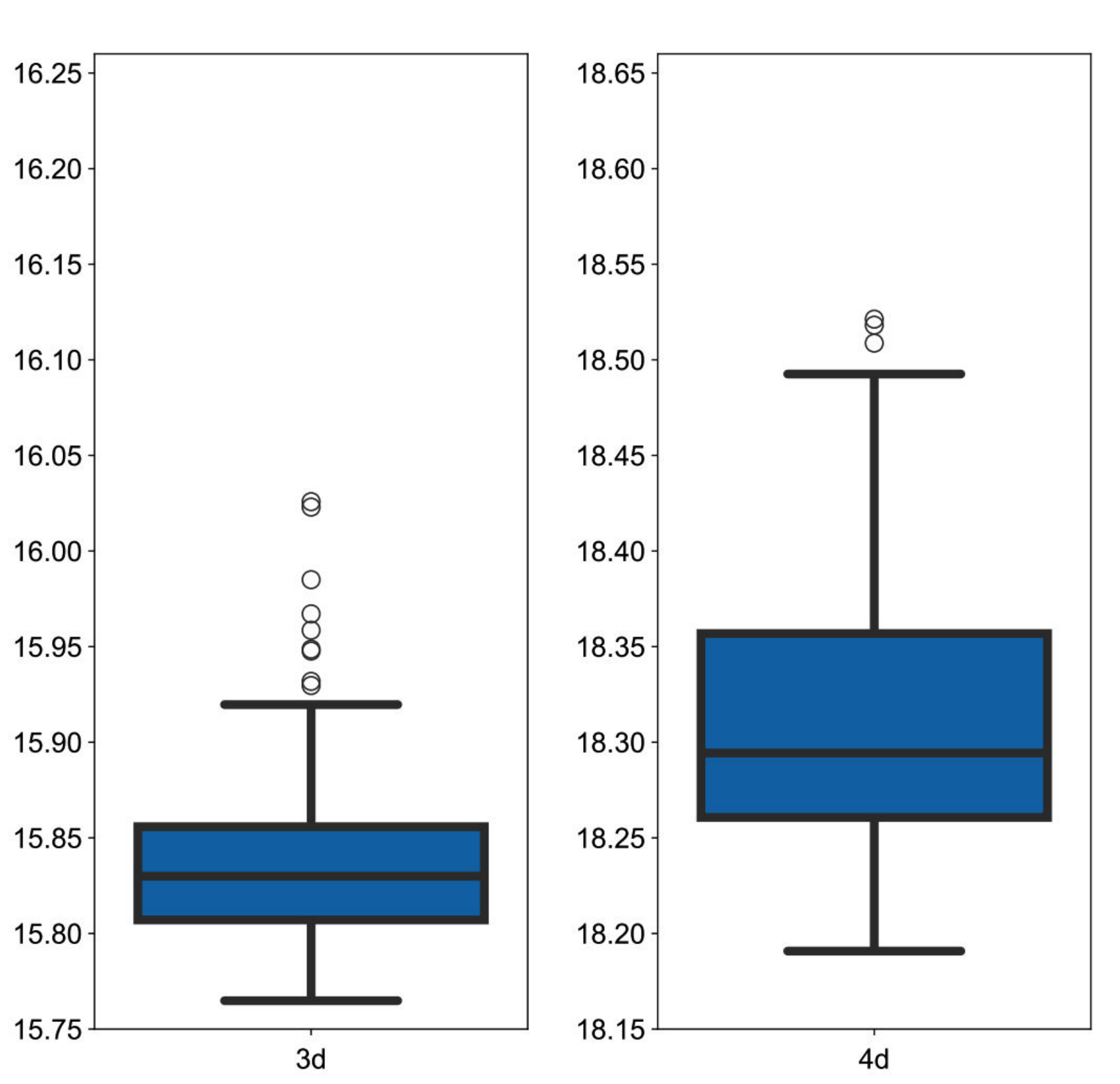}
       }
     \subfloat[]{
       \label{converge}
       \includegraphics[width=0.4\linewidth]{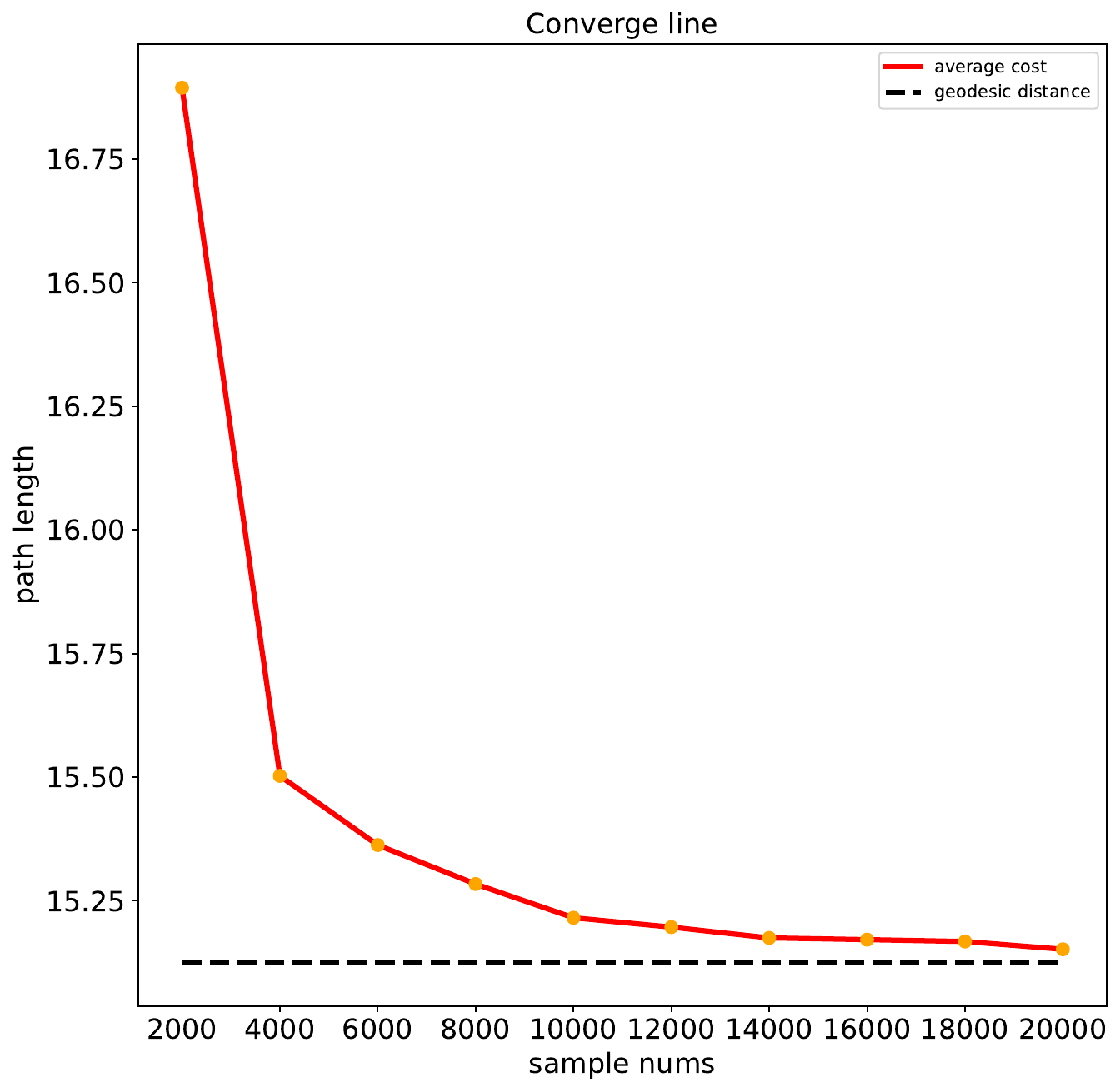}
       }
    \caption{(a) Boxplot of 150 times simulation. (b) Line plot of the path length varying with the number of sampling points.}
    \label{Analysis}
\end{figure}

It is worth mentioning that we discovered a wonderful phenomenon in the geodesic traversal experiment: when a very small perturbation occurs in the initial velocity direction of the geodesic equation, the solution of the geodesic equation, that is, shape of the geodesic,changes particularly dramatically. In other words, we find that the geometry of the geodesic is highly sensitive and dependent on the initial velocity direction. This exciting discovery will serve as a research direction for our future work.

The topic of solving the path planning problem by constructing a new Riemannian metric model is very difficult, and its related research is relatively few.
Therefore, we choose the smooth normal distribution function to simulate the raised peaks in the present paper.
For the case of non-smooth functions, we hope to explore in the subsequent research, such as using smoothing tools and other methods.
\section{Conclusion}
\label{sec:Conclusion}
In this paper, we propose a method based on Riemannian metric to solve the optimal path planning problem on the two-dimensional smooth manifold in high dimensional space.
By projecting the two-dimensional smooth manifold in high-dimensional space onto the plane $\mathbb{R}^2$, and constructing a new Riemannian metric reflecting the information of high-dimensional environment on the two-dimensional projection plane, we transform the general optimal path planning problem in the high-dimensional space into a geometric problem in a two-dimensional plane, and realize the dimension reduction of the high-dimensional problem.
And we strictly prove that the new Riemannian metric is isometric to the induced metric of submanifold by Euclidean metric on $\mathbb{R}^n$.

In addition, we propose an RRT*-R algorithm based on Riemannian metric, and carry out several simulation experiments in scenarios with different surface curvature and dimensions. The experimental results are consistent with the theory.
In order to verify the correctness and running performance of the RRT*-R algorithm, we conduct comparative experiments with the original RRT* algorithm using Euclidean distance.Under the condition that the number of sampling points and step size are the same, we find that the path retrieved by RRT*-R algorithm has better smoothness and optimization properties as the dimension of the workspace increases.
For the obstacle-free scenario satisfying geodesic completeness, we compare the experimental results of the proposed RRT*-R algorithm with the theoretical optimal path length, namely geodesic length.
We find that the error of the algorithm is very small, and lead the robot to effectively avoid the peak area where the environmental factors such as height and ground resistance changed dramatically, which verified the correctness of the algorithm.
In order to check the stability of the algorithm, we also conducted a large number of repeatability tests as well as convergence experiments with respect to the number of sampling points.



\section{Appendix}
\subsection{The case of two-dimensional smooth manifolds in $\mathbb{R}^n$}

The Christoffel symbols are defined as:
$$
\small\begin{aligned}
  \Gamma_{ij}^k =\frac{1}{2} h^{kl}\left(\frac{\partial h_{il} }{\partial x^{j} }+\frac{\partial h_{lj} }{\partial x^i}-\frac{\partial h_{ij} }{\partial x^{l} } \right) 
\end{aligned}
$$

According to Einstein summation convention, they can be calculated in detail as below.

$$\small
\begin{aligned}
\Gamma_{11}^{1}&=\frac{1}{d}\left[1+\sum\limits_{k=3}^n\left(\frac{\partial x_k}{\partial x_2}\right)^2\right]  
\cdot
\left[\sum\limits_{k=3}^n \frac{\partial x_k}{\partial x_1} \cdot\frac{\partial^2 x_k}{\partial x_1 \partial x_1}\right]
-\frac{1}{d} \left[\sum\limits_{k=3}^n\frac{\partial x_k}{\partial x_1}\cdot \frac{\partial x_k}{\partial x_2}\right]
\cdot 
\left[\sum\limits_{k=3}^n\frac{\partial^2 x_k}{\partial x_1\partial x_1}\cdot\frac{\partial x_k}{\partial x_2}\right]\\
\Gamma_{11}^{2}&=-\frac{1}{d}\left[ \sum\limits_{k=3}^n \frac{\partial x_k}{\partial x_1}\cdot\frac{\partial x_k}{\partial x_2}\right]
\cdot
\left[\sum\limits_{k=3}^n \frac{\partial x_k}{\partial x_1}\cdot \frac{\partial^2 x_k}{\partial x_1 \partial x_1}\right]
+\frac{1}{d}\left[1+\sum\limits_{k=3}^n\left(\frac{\partial x_k}{\partial x_1}\right)^2\right]
\cdot
\left[\sum\limits_{k=3}^n\frac{\partial^2 x_k}{\partial x_1 \partial x_1}\cdot\frac{\partial x_k}{\partial x_2}\right]\\
\Gamma_{12}^{1}&=\Gamma_{21}^{1} = \frac{1}{d} \left[ 1+\sum\limits_{k=3}^n \left(\frac{\partial x_k}{\partial x_2}\right)^2\right]
\cdot
\left[\sum\limits_{k=3}^n \frac{\partial x_k}{\partial x_1}\cdot \frac{\partial^2 x_k}{\partial x_1 \partial x_2}\right]
-\frac{1}{d}\left[\sum\limits_{k=3}^n\frac{\partial x_k}{\partial x_1}\cdot \frac{\partial x_k}{\partial x_2}\right]\cdot
\left[\sum\limits_{k=3}^n \frac{\partial x_k}{\partial x_2}\cdot\frac{\partial^2 x_k}{\partial x_1 \partial x_2}\right]\\
\Gamma_{12}^{2}&=\Gamma_{21}^{2} = -\frac{1}{d} \left[\sum\limits_{k=3}^n \frac{\partial x_k}{\partial x_1}\cdot\frac{\partial x_k}{\partial x_2}\right]\cdot
\left[\sum\limits_{k=3}^n \frac{\partial x_k}{\partial x_1}\cdot
\frac{\partial^2 x_k}{\partial x_1 \partial x_2}\right]
+\frac{1}{d} \left[1+\sum\limits_{k=3}^n \left(\frac{\partial x_k}{\partial x_1}\right)^2\right]\cdot
\left[\sum\limits_{k=3}^n \frac{\partial x_k}{\partial x_2}\cdot
\frac{\partial^2 x_k}{\partial x_1 \partial x_2}\right]
\end{aligned}$$
$$
\begin{aligned}
\Gamma_{22}^{1}&= \frac{1}{d} \left[1+\sum\limits_{k=3}^n \left(\frac{\partial x_k}{\partial x_2}\right)^2\right]\cdot
\left[\sum\limits_{k=3}^n \frac{\partial x_k}{\partial x_1}\cdot
\frac{\partial^2 x_k}{\partial x_2 \partial x_2}\right]
-\frac{1}{d} \left[\sum\limits_{k=3}^n \frac{\partial x_k}{\partial x_1}\cdot\frac{\partial x_k}{\partial x_2}\right]\cdot
\left[\sum\limits_{k=3}^n \frac{\partial x_k}{\partial x_2}\cdot
\frac{\partial^2 x_k}{\partial x_2 \partial x_2}\right]\\
\Gamma_{22}^{2}&= -\frac{1}{d} \left[\sum\limits_{k=3}^n \frac{\partial x_k}{\partial x_1}\cdot \frac{\partial x_k}{\partial x_2}\right]\cdot
\left[\sum\limits_{k=3}^n \frac{\partial x_k}{\partial x_1}\cdot \frac{\partial^2 x_k}{\partial x_2 \partial x_2}\right]
+\frac{1}{d}\left[1+\sum\limits_{k=3}^n \left(\frac{\partial x_k}{\partial x_1}\right)^2\right]\cdot
\left[\sum\limits_{k=3}^n \frac{\partial x_k}{\partial x_2}\cdot \frac{\partial^2  x_k}{\partial x_2 \partial x_2}\right]
\end{aligned}$$

Substitute the above eight symbols into the geodesic equation:
$${\small\left\{\begin{aligned}
\frac{d x_1}{d t}=&y_1 \\
\frac{d x_2}{d t}=&y_2 \\
\frac{d y_1}{d t}=&-\left(y_1\right)^2 \Gamma_{11}^{1}-2 y_1 y_2 \Gamma_{12}^{1}-\left(y_2\right)^2 \Gamma_{22}^{1} \\
\frac{d y_2}{d t}=&-\left(y_1\right)^2 \Gamma_{11}^{2}-2 y_1 y_2 \Gamma_{12}^{2}-\left(y_2\right)^2 \Gamma_{22}^{2}
\end{aligned}\right.}$$

\subsection{The case of two-dimensional smooth manifolds in $\mathbb{R}^3$}

Denote the normal distribution surface as $x_3=e^{-\left(x_1^2+x_2^2\right)}$. Calculate the Christoffel symbol as follows.
$$
\small\begin{aligned}
  \Gamma_{11}^1 &=\frac{-8 x_1^3+4 x_1}{e^{2\left(x_1^2+x_2^2\right)}+4 x_1^2+4 x_2^2} ,
  \Gamma_{11}^2 =\frac{-8 x_1^2 x_2+4 x_2}{e^{2\left(x_1^2+x_2^2\right)}+4 x_1^2+4 x_2^2}\\
  \Gamma_{12}^1 &=\Gamma _{21}^{1}=\frac{-8 x_1^2 x_2}{e^{2\left(x_1^2+x_2^2\right)}+4 x_1^2+4 x_2^2},
  \Gamma_{12}^2 =\Gamma_{21}^2=\frac{-8 x_1 x_2^2}{e^{2\left(x_1^2+x_2^2\right)}+4 x_1^2+4 x_2^2}\\
  \Gamma_{22}^1 &=\frac{-8 x_1 x_2^2+4 x_1}{e^{2\left(x_1^2+x_2^2\right)}+4 x_1^2+4 x_2^2},
  \Gamma_{22}^2 =\frac{-8 x_2^3+4 x_2}{e^{2\left(x_1^2+x_2^2\right)}+4 x_1^2+4 x_2^2}
\end{aligned}
$$

A smooth curve $\gamma: I \rightarrow \mathbb{R}^2$ on this surface is geodesic if and only if the curve satisfies geodesic equation below
$$\small{\left\{\begin{aligned}
\frac{d x_1}{d t}=&y_1 \\
\frac{d x_2}{d t}=&y_2 \\
\frac{d y_1}{d t}=&-\left(y_{1}\right)^2 \cdot \frac{-8 x_1^3+4 x_1}{e^{2\left(x_1^2 + x_2^2\right)}+4 x_1^2+4 x_2^2}
-2 y_1 y_2 \cdot \frac{-8 x_1^2 x_2}{e^{2\left(x_1^2+x_2^2\right)}+4 x_1^2+4 x_2^2}
-\left(y_2\right)^2 \cdot \frac{-8 x_1 x_2^2+4 x_1}{e^{2\left(x_1^2+x_2^2\right)}+4 x_1^2+4 x_2^2} \\
\frac{d y_2}{d t}=&-\left(y_1\right)^2 \cdot \frac{-8 x_1^2 x_2+4 x_2}{e^{2\left(x_1^2+x_2^2\right)}+4 x_1^2+4 x_2^2}
-2 y_1 y_2 \cdot \frac{-8 x_1 x_2^2}{e^{2\left(x_1^2+x_2^2\right)}+4 x_1^2+4 x_2^2}
-\left(y_2\right)^2 \cdot \frac{-8 x_2^3+4 x_2}{e^{2\left(x_1^2+x_2^2\right)}+4 x_1^2+4 x_2^2}
\end{aligned}\right.} $$

\bibliographystyle{elsarticle-num} 
\bibliography{ref}





\end{document}